\documentclass[12pt]{article}
\usepackage[utf8]{inputenc}
\usepackage[margin=1in]{geometry}
\usepackage{amsmath, amsfonts, amssymb, bm, amsthm}
\usepackage{hyperref}  
\usepackage{cleveref}
\usepackage[
  backend=biber,
  style=numeric,
  citestyle=numeric
]{biblatex}
\usepackage{booktabs}
\usepackage{graphicx}
\usepackage{tikz}
\usetikzlibrary{positioning, arrows.meta, shapes}
\usepackage[table]{xcolor}
\usepackage{algorithm2e}
\usepackage{algpseudocode}
\usepackage{listings}
\hypersetup{colorlinks=true, linkcolor=blue, citecolor=blue, urlcolor=blue}

\usetikzlibrary{shapes.geometric,arrows.meta,positioning,calc}

\newtheorem{lemma}{Lemma}
\newtheorem{proposition}{Proposition}

\newcommand{\Ncal}{\mathcal{N}}
\newcommand{\KL}{D_{\mathrm{KL}}}
\newcommand{\Cat}{\mathrm{Cat}}
\newcommand{\softplus}{\mathrm{softplus}}
\newcommand{\diag}{\mathrm{diag}}
\newcommand{\R}{\mathbb{R}}
\newcommand{\E}{\mathbb{E}}

\title{\textbf{eXact-Prior Variational Autoencoder (X-VAE): 
Learning Data-Adaptive Gaussian Mixture Priors for Latent Distributions}}

\author{Qijun Chen \textsuperscript{1}, Shaofan Li \textsuperscript{1,$\dagger$} \\
\\
	\textsuperscript{1} Department of Civil and Environmental Engineering, \\
	University of California, Berkeley, CA, 94720, USA\\
	\textsuperscript{$\dagger$} Correspondence: \texttt{shaofan@berkeley.edu} \\
}

\date{\today}

\begin{document}
\maketitle

\begin{abstract}
Variational Autoencoders (VAEs) commonly assume a standard isotropic Gaussian prior over the latent space, an assumption that often fails to capture the true distribution of latent representations for complex datasets. This mismatch can limit reconstruction accuracy, reduce sample quality, and constrain the expressive power of the learned latent space. We propose the eXact-Prior Variational Autoencoder (X-VAE), a framework that replaces the conventional standard normal prior with a Gaussian prior derived from the latent representations of a pretrained autoencoder (AE). Specifically, the empirical mean and standard deviation of the AE latent codes are used to parameterize a data-adaptive prior that more closely reflects the underlying structure of the training data. During generation, X-VAE introduces a latent scaling factor that enables explicit control over the variance of the sampled latent vectors, providing a simple mechanism for balancing sample diversity and fidelity. This flexibility makes the proposed approach particularly well suited for applications such as industrial and engineering design, where generated solutions must satisfy strict structural or functional constraints while still permitting meaningful design exploration. We present the mathematical formulation of X-VAE, derive the corresponding KL divergence objective for the proposed prior, and evaluate the method on standard benchmark datasets. Experimental results demonstrate that X-VAE preserves reconstruction quality while producing latent representations that better align with the empirical data distribution, leading to improved controllability and more realistic generated samples.
\end{abstract}
\section{Introduction}
\label{sec:intro}

Variational Autoencoders (VAEs) have become one of the foundational frameworks in deep generative modeling by combining probabilistic inference with deep neural representation learning to model complex data distributions~\cite{kingma2022vae,rezende2014}. A key assumption underlying the standard VAE is the use of a fixed isotropic Gaussian prior over the latent variables,

\begin{equation}
p(z)=\mathcal{N}(0,\mathbf{I}),
\end{equation}

which simplifies the computation of the Kullback--Leibler (KL) divergence within the Evidence Lower Bound (ELBO) while encouraging a smooth and regularized latent space. Although mathematically convenient, this assumption is often inadequate for representing the latent structure of complex datasets. In many real-world applications, particularly those involving multimodal, high-dimensional, or structurally constrained data, the aggregated posterior deviates substantially from a standard Gaussian distribution~\cite{dai2019diagnosing,arvanitidis2017latent}. Consequently, the mismatch between the assumed prior and the learned latent distribution can reduce reconstruction quality, limit the expressiveness of the latent space, and contribute to phenomena such as prior--posterior mismatch during generation and posterior collapse~\cite{rezende2016flow,lucas2019understanding,he2019lagging}.

To alleviate these limitations, numerous extensions to the VAE framework have proposed more expressive latent priors, including mixture-based priors such as VampPrior~\cite{tomczak2018vamp}, flow-based priors~\cite{kingma2017improvingvariationalinferenceinverse}, and hierarchical latent-variable models~\cite{sonderby2016ladder,vahdat2021Hierarchy}. While these approaches improve the flexibility of the latent distribution, they generally introduce additional trainable parameters, increased computational complexity, and more challenging optimization procedures.

In contrast, deterministic Autoencoders (AEs) learn latent representations by directly minimizing reconstruction error without imposing a probabilistic prior~\cite{goodfellow2016deep}. Although conventional autoencoders are not generative models, the latent embeddings they learn capture important structural characteristics of the underlying data~\cite{ghosh2019from}. The empirical statistics of these latent representations therefore provide a simple yet informative estimate of the latent distribution and motivate the use of a data-adaptive prior for variational learning.

Motivated by this observation, we propose the \textbf{eXact-Prior Variational Autoencoder (X-VAE)}, a framework that replaces the conventional isotropic Gaussian prior with a Gaussian distribution estimated from the latent representations of a pretrained autoencoder. Let

\[
\left\{z_i^{AE}\right\}_{i=1}^{N}
\]

denote the latent embeddings produced by the pretrained autoencoder for the training dataset. The empirical mean and covariance of these embeddings are computed as

\begin{equation}
\mu_{AE}
=
\frac{1}{N}
\sum_{i=1}^{N}
z_i^{AE},
\end{equation}

\begin{equation}
\Sigma_{AE}
=
\frac{1}{N-1}
\sum_{i=1}^{N}
\left(z_i^{AE}-\mu_{AE}\right)
\left(z_i^{AE}-\mu_{AE}\right)^T.
\end{equation}

These empirical statistics define the latent prior

\begin{equation}
p_{AE}(z)
=
\mathcal{N}
\left(
\mu_{AE},
\Sigma_{AE}
\right),
\end{equation}

which replaces the standard normal prior in the VAE objective. By grounding the latent prior in the empirical distribution learned by the autoencoder, X-VAE encourages improved alignment between the prior and the intrinsic structure of the data while preserving the simplicity and analytical tractability of Gaussian latent variables.

To further improve generative flexibility, X-VAE introduces a global latent scaling parameter, $\alpha$, that is applied only during the sampling stage. Generated latent vectors are sampled according to

\begin{equation}
z_{\mathrm{gen}}
\sim
\mathcal{N}
\left(
\mu_{AE},
\alpha^{2}\Sigma_{AE}
\right),
\end{equation}

where $\alpha$ scales the standard deviation of the latent distribution. Smaller values of $\alpha$ generate samples that remain close to the empirical latent manifold, producing higher-fidelity reconstructions, whereas larger values increase latent variability and encourage exploration of less frequently observed regions of the latent space. This provides a simple yet effective mechanism for controlling the trade-off between generation diversity and sample fidelity without requiring retraining.

Unlike many existing learned-prior approaches that jointly optimize the prior together with the encoder and decoder, X-VAE decouples prior estimation from variational learning. The latent prior is estimated once from a pretrained deterministic autoencoder and subsequently treated as a fixed probabilistic prior during VAE training. This separation preserves the simplicity and stability of the original VAE optimization procedure while allowing the prior to more accurately reflect the empirical latent structure of the training data. Consequently, X-VAE retains the computational efficiency of conventional VAEs without introducing additional trainable prior networks, hierarchical latent models, or flow-based transformations.

Although X-VAE is evaluated on standard generative benchmarks, including MNIST and CelebA~\cite{liu2015faceattributes}, its motivation extends well beyond conventional image synthesis. Many engineering and industrial design problems require generated samples to satisfy strict geometric, structural, or physical constraints while simultaneously maintaining sufficient diversity for design exploration. By constructing the latent prior directly from data-derived representations, X-VAE provides a practical framework for constrained generative modeling in applications such as vehicle crash deformation analysis, topology optimization, and structural ship hull generation~\cite{LI2022100849,tran2024car}.

The primary contributions of this work are summarized as follows:

\begin{itemize}

\item We propose the \textbf{eXact-Prior Variational Autoencoder (X-VAE)}, which replaces the conventional isotropic Gaussian prior with a data-adaptive Gaussian prior estimated from the latent representations of a pretrained autoencoder.

\item We derive the empirical latent prior from the mean and covariance of pretrained autoencoder embeddings, preserving an analytically tractable KL divergence while improving alignment between the latent prior and the underlying data distribution.

\item We introduce a latent variance scaling mechanism that provides explicit control over the diversity--fidelity trade-off during generation without requiring additional model training.

\item We demonstrate the effectiveness of X-VAE on standard image generation benchmarks and highlight its applicability to engineering and industrial design problems requiring structurally constrained yet diverse generative models.

\end{itemize}

\section{Methodology}
\label{sec:method}

In a standard Variational Autoencoders (VAEs), the latent prior is typically chosen as a fixed isotropic Gaussian, such as $p(z) = \mathcal{N}(0, I)$ while the posterior $q(z|x)$ is learned from $q(z|x) = \mathcal{N}(\mu(x), diag(\Sigma^2(x)))$, the KL divergence term in the loss forces $q(z|x)$ to be close to $p(z)$, a standard normal distribution.
This choice simplifies the Kullback-Leibler (KL) divergence term in the evidence lower bound (ELBO)~\cite{kingma2022vae,hoffman2016elbo} and provides a regularized latent space, however, it may be overly restrictive.
The learned posterior $q_\phi(z|x)$ often deviates from the fixed prior distribution or the true prior distribution is mostly not in a standard normal distribution~\cite{rosca2018distribution,makhzani2015adversarial,koike2022autovae}.
This issue leads to a mismatch between the training and generation distributions.
In contrast, for pure autoencoders, there is only reconstruction by using match errors such as mean square errors (MSE) or Binary Cross-Entropy (BCE)~\cite{vincent2010stacked, rumelhart1986learning}.
The primary goal for an autoencoder is to use its latent space to force a compressed representation of the data~\cite{hinton2006reducing}.
The parameters such as mean ($\mu$), standard deviation ($\sigma$), covariance matrix ($\Sigma$), and weights of gaussian mixture model ($\pi$) learned by the latent space of an Autoencoder if fitted to a mixture of Gaussian could be translated as nonlinear transformation of the data manifold~\cite{arvanitidis2017latent}.
Based on that we can treat the compressed manifold as an estimated true data.
We can approach implicit latent statistics in two assumptions. In the first assumption, we assume the latent space follows a single Gaussian while the second we assume that it follows a Gaussian of Mixture.

By the first approach, which is simple and straightforward, the VAE model learns an expressive and data-driven statistics of an implicit prior learned by Autoencoder from data rather than fixing it and the latent space geometry aligns with dataset's manifold from Autoencoder so the prior becomes data-aware. 
This is basically replacing a naive isotropic prior $\Ncal(0,I)$ with a nonlinear learned prior. 
Concretely, let \(\{x_i\}_{i=1}^N\) be the training data, and let \(z_i = E_{\mathrm{AE}}(x_i)\) be the corresponding latent codes produced by a pretrained AE encoder ($E_{AE}$).
We compute  
\begin{equation}
\label{equ:single_gaussian}
    \mu_{\mathrm{AE}} = \frac{1}{N} \sum_{i=1}^{N} z_i, \quad 
    \Sigma_{\mathrm{AE}} = \frac{1}{N} \sum_{i=1}^{N}(z_i - \mu_{\mathrm{AE}})(z_i-\mu_{\mathrm{AE}})^T\\
\end{equation}
so the prior can be defined as
\begin{equation}  
    p_{\mathrm{AE}}(z) = \mathcal{N}\big(z \,; \, \mu_{\mathrm{AE}}, \, \mathrm{diag}(\sigma_{\mathrm{AE}}^2)\big)
\end{equation}
with the assumption that the approximate posterior is a Gaussian distribution, we have
\begin{equation}
    q_\phi(z|x) = \mathcal{N}\bigl(z; \mu_\phi(x), \,\mathrm{diag}(\sigma_\phi(x)^2)\bigr)
\end{equation}
instead of the $\Ncal(0, I)$ in standard VAE.

For the second approach, still let $z_i=E_{\mathrm{AE}}(x_i)$ the latent codes of a pretrained AE encoder with a $K$-dimensional latent instead.
We fit a $K$-component diagonal Gaussian mixture on the AE latent codes $\{z_i\}$ (e.g.\ by expectation--maximization) and freeze it as the prior, so the latent space inherently aligns with the AE's cluster structure, mathematically,
\begin{equation}
    p_{AE}(z)=\sum_{k=1}^{K}\pi^p_k\,\Ncal\!\big(z;\,M^p_{k,:},\,\diag(S^p_{k,:}{}^{2})\big),
\label{eq:gmm-prior}
\end{equation}
where $M^p,S^p\in\R^{K\times K}$ collect the component means and standard deviations and $K$ is both the number of clusters and the latent dimensionality.
Obtaining mixture weights $\pi^p\in\Delta^{K-1}$, component means $M^p_{k,:}\in\R^{K}$, and standard deviations $S^p_{k,:}\in\R^{K}$, $k=1,\dots,K$, these define the prior~\Cref{eq:gmm-prior}, which is held fixed throughout VAE training.
Unlike learned priors~\cite{tomczak2018vamp,kingma2017improvingvariationalinferenceinverse,vahdat2021Hierarchy}, \Cref{eq:gmm-prior} introduces no trainable prior parameters, but it is estimated once and frozen, anchoring the VAE latent space to the AE's manifold and alleviating prior--posterior mismatch.
The VAE encoder emits, per component $k$, an assignment logit $\ell_k$, a posterior mean $\mu^q_k$, and a log-variance $\log\sigma^{q2}_k$ (so $\sigma^q_k=\exp(\tfrac12\log\sigma^{q2}_k)$).
The posterior mixture weights (responsibilities) are aggregated at the batch level over $N$ samples,
\begin{equation}
    \pi^q_k=\frac{1}{N}\sum_{i=1}^{N}\frac{\softplus(\ell_{i,k})}{\sum_{j=1}^{K}\softplus(\ell_{i,j})},
\label{eq:piq}
\end{equation}
where $\softplus$ keeps the sum and non-negative so that $\pi^q\in\Delta^{K-1}$.
The approximate posterior for coordinate $k$ is the Gaussian $\Ncal^q_k=\Ncal(\mu^q_k,\sigma^{q2}_k)$.
The latent code is built one coordinate at a time by transporting noise---sourced from a categorically routed prior component---onto the encoder's posterior Gaussian as illustrated in \Cref{fig:arch}.

\begin{figure}[!htb]
\centering
\begin{tikzpicture}[font=\small,
  enc/.style={trapezium, draw, trapezium left angle=70, trapezium right angle=110,
              minimum height=1.05cm, minimum width=1.0cm, fill=orange!22, align=center},
  dec/.style={trapezium, draw, trapezium left angle=110, trapezium right angle=70,
              minimum height=1.05cm, minimum width=1.0cm, fill=teal!18, align=center},
  io/.style={draw, rounded corners, minimum height=0.7cm, minimum width=0.55cm, fill=yellow!30},
  lat/.style={draw, rounded corners, fill=red!10, align=center, inner sep=3pt},
  vlat/.style={draw, rounded corners, fill=gray!18, align=center, inner sep=3pt},
  prior/.style={draw, rounded corners, fill=blue!8, align=center, inner sep=4pt},
  ar/.style={-{Stealth[length=2mm]}, thick},
  dar/.style={-{Stealth[length=2mm]}, thick, dashed}]

\node[io] (x1) {$x$};
\node[enc, right=0.5cm of x1] (ae) {AE Encoder\\$E_{\mathrm{AE}}$};
\node[lat, right=0.55cm of ae] (zae) {AE latent\\$z_{\mathrm{AE}}$};
\node[dec, right=0.55cm of zae] (aed) {AE Decoder\\$D_{\mathrm{AE}}$};
\node[io, right=0.5cm of aed] (xh1) {$\hat{x}$};
\draw[ar] (x1)--(ae); \draw[ar] (ae)--(zae); \draw[ar] (zae)--(aed); \draw[ar] (aed)--(xh1);

\node[io, below=3.5cm of x1] (x2) {$x$};
\node[enc, right=0.5cm of x2] (ve) {VAE Encoder\\$q_\phi(z|x)$};
\node[vlat, right=0.55cm of ve] (zq) {VAE latent\\routed transport};
\node[dec, right=0.55cm of zq] (vd) {VAE Decoder\\$p_\theta(x|z)$};
\node[io, right=0.5cm of vd] (xh2) {$\hat{x}$};
\draw[ar] (x2)--(ve); \draw[ar] (ve)--(zq); \draw[ar] (zq)--(vd); \draw[ar] (vd)--(xh2);

\node[prior] (pr) at ($(zae)!0.5!(zq)$)
  {GMM prior \ $p(z)=\sum_{k}\pi^p_k\,\Ncal(M^p_{k,:},\diag S^{p2}_{k,:})$};
\draw[dar] (zae) -- (pr) node[midway, right=1pt] {\scriptsize fit (EM)};
\draw[dar] (zq) -- (pr) node[midway, right=1pt] {\scriptsize $\KL\!\big(q_\phi(z|x)\,\|\,p(z)\big)$};
\end{tikzpicture}
\caption{Architecture of the proposed X-VAE. A pretrained autoencoder learns latent codes
$z_{\mathrm{AE}}$; a $K$-component diagonal Gaussian mixture fit on these codes defines a fixed,
data-driven prior $p(z)$. The VAE encoder produces a per-coordinate posterior $q_\phi(z|x)$ and
mixture responsibilities; each latent coordinate is formed by routed transport~\Cref{eq:route}--%
\Cref{eq:code} and regularized toward $p(z)$ by the mixture KL~\Cref{eq:kl}.}
\label{fig:arch}
\end{figure}

\paragraph{Transport operator.}
For coordinate $k$, the transport map sends a scalar noise sample $\epsilon$ to $\Ncal^q_k$ using the
coordinate's diagonal prior as reference
\begin{equation}
T_k(\epsilon)=\mu^q_k+\frac{\sigma^q_k}{\sigma^p_k}\,(\epsilon-\mu^p_k).
\label{eq:transport}
\end{equation}
If $\epsilon\sim\Ncal(\mu^p_k,\sigma^{p2}_k)$ then $T_k(\epsilon)\sim\Ncal^q_k$, i.e.\
\Cref{eq:transport} is the reparameterization of the posterior Gaussian (~\Cref{lem:transport})~\cite{Marzouk_2016}.

\paragraph{Categorical routing.}
A prior component is selected per coordinate by a categorical over mixture weights
$\boldsymbol{\pi}\in\{\pi^p,\pi^q\}$, and the noise is drawn from that component's $k$-th coordinate,
\begin{equation}
c_k\sim\Cat(\boldsymbol{\pi}),\qquad \epsilon_k\sim\Ncal\!\big(M^p_{c_k,k},\,S^{p2}_{c_k,k}\big).
\label{eq:route}
\end{equation}

\paragraph{Latent code.}
Each coordinate is then either \emph{transported} or \emph{emitted directly}, and the code collects
the $K$ coordinates,
\begin{equation}
z_k=\begin{cases}T_k(\epsilon_k)&\text{(transport)}\\[2pt]\epsilon_k&\text{(direct)}\end{cases},
\qquad \mathbf{z}=\{z_k\}_{k=1}^{K}.
\label{eq:code}
\end{equation}
A transported coordinate passes the encoder's posterior through~\Cref{eq:transport}, while a direct
coordinate injects the discrete cluster identity of the routed prior component as illustrated in \Cref{fig:method}. The behaviour is
governed by three choices, the routing weights $\boldsymbol{\pi}$ in~\Cref{eq:route}, the
source reference coupling matched ($c_k\!\equiv\!k$, so $\epsilon_k$ and the transport
in~\Cref{eq:transport} share a component and $z_k=\mu^q_k+\sigma^q_k\eta_k$, $\eta_k\sim\Ncal(0,1)$),
or routed ($c_k\!\sim\!\Cat(\pi^p)$, so the selected component shifts and rescales the sample,
\Cref{prop:routed}), and the per-coordinate mode in~\Cref{eq:code}. We use the plain
code $\{z_k\}$; a responsibility-scaled code $\{\pi^q_k z_k\}$ is left to future work.

Two configurations anchor our study. The per-coordinate configuration transports every
coordinate with routed noise ($c_k\!\sim\!\Cat(\pi^p)$, mismatched), so all $K$ dimensions are routed
transports. The interleaved configuration places matched transports on half of the coordinates
and direct emissions (routed by $\pi^q$, tied across those coordinates) on the other half, isolating a
transported sub-space and an anchored prior-draw sub-space within the same $K$-dimensional code~\cite{dupont2018learningdisentangledjointcontinuous}.

\begin{figure}[!htb]
\centering
\resizebox{\textwidth}{!}{%
\begin{tikzpicture}[font=\small, node distance=7mm and 10mm,
  io/.style={draw, rounded corners, minimum height=0.7cm, fill=yellow!30},
  enc/.style={trapezium, draw, trapezium left angle=70, trapezium right angle=110,
              minimum height=0.95cm, fill=orange!22, align=center},
  dec/.style={trapezium, draw, trapezium left angle=110, trapezium right angle=70,
              minimum height=0.95cm, fill=teal!18, align=center},
  pp/.style={draw, rounded corners, fill=blue!8, align=center, inner sep=3pt},
  cc/.style={draw, rounded corners, fill=green!12, align=center, inner sep=3pt},
  tt/.style={draw, rounded corners, fill=red!8, align=center, inner sep=3pt},
  ar/.style={-{Stealth[length=2mm]}, thick}]

\node[io] (x) {$x$};
\node[enc, right=of x] (enc) {VAE Encoder\\$q_\phi(z|x)$};
\node[pp, right=of enc] (post) {$\ell_k,\ \mu^q_k,\ \sigma^q_k$};

\node[pp, below=1.5cm of enc] (prior) {Frozen GMM prior\\$\pi^p,\,M^p,\,S^p$};
\node[cc, right=of prior] (route) {route $c_k\!\sim\!\Cat(\pi^p)$,\\draw $\epsilon_k$ \,(\Cref{eq:route})};
\node[tt, right=of route] (transp) {transport \,(\Cref{eq:transport})\\$z_k=\mu^q_k+\tfrac{\sigma^q_k}{\sigma^p_k}(\epsilon_k-\mu^p_k)$};
\node[dec, right=of transp] (dec) {VAE Decoder\\$p_\theta(x|z)$};
\node[io, right=of dec] (xh) {$\hat{x}$};

\draw[ar] (x)--(enc);
\draw[ar] (enc)--(post);
\draw[ar] (prior)--(route);
\draw[ar] (route)--(transp);
\draw[ar] (transp)--(dec);
\draw[ar] (dec)--(xh);
\draw[ar] (post.south) to[out=-90,in=90]
   node[pos=0.45,right=1pt]{\scriptsize $\mu^q_k,\sigma^q_k$} (transp.north);
\end{tikzpicture}}%
\caption{Routed-transport sampling (one latent coordinate). The encoder produces per-coordinate
posterior parameters $(\ell_k,\mu^q_k,\sigma^q_k)$. A component is routed from the frozen mixture,
$c_k\sim\Cat(\pi^p)$, and its noise $\epsilon_k$ is transported~\Cref{eq:transport} onto the
posterior Gaussian to give $z_k$, which is decoded. The responsibilities $\pi^q$~\Cref{eq:piq} and the
prior enter the mixture KL~\Cref{eq:kl} of the training objective~\Cref{eq:elbo}.}
\label{fig:method}
\end{figure}

For the decoder \(p_\theta(x|z)\), we still apply the same regulations such as maximize the usual Evidence Lower Bound (ELBO).
Since we replace the conventional isotropic Gaussian prior in a VAE with a fixed, data-driven latent prior estimated from a pretrained autoencoder, the standard ELBO enforces per-sample alignment between the approximate posterior $q_\phi(z|x)$ and the prior, it does not guarantee that the aggregated posterior
\begin{equation}
    q_\phi(z) = \int q_\phi(z|x)\,p_{\text{data}}(x)\,dx
\end{equation}
matches the AE-derived latent distribution.
\begin{equation}
\mathcal{L}_{\mathrm{X-VAE}}(x)
=
\mathbb{E}_{q_\phi(z | x)}[\log p_\theta(x | z)]
-
D_{\mathrm{KL}}\big(q_\phi(z|x) \,\Vert\, p_{\text{AE}}(z)\big)
\end{equation}
For the first approach, the prior distribution from Autoencoder \(p_{\mathrm{AE}}(z)\) replaces the usual \(\mathcal{N}(0, I)\).
Thus, the KL divergence has a closed‑form:  
\begin{align}  
D_{\mathrm{KL}}(q_\phi(z|x) \,\Vert\, p_{\mathrm{AE}}(z)) &= \frac{1}{2} \sum_{j=1}^{d} \Biggl[ \log \frac{\sigma_{\mathrm{AE},j}^2}{\sigma_{\phi,j}(x)^2}  
+ \frac{\sigma_{\phi,j}(x)^2+(\mu_{\phi, j} - \mu_{AE, j})}{\sigma_{\mathrm{AE},j}^2}  
- 1 \Biggr]
\end{align}  
where \(d\) is the latent dimensionality, \(\mu_{\phi,j}(x)\) / \(\sigma_{\phi,j}(x)\) the \(j\)-th dimension of the output of encoder, and \(\mu_{\mathrm{AE},j}\), \(\sigma_{\mathrm{AE},j}\) are the prior’s per‑dimension statistics (see \Cref{kl_derive}).
The reparameterization trick~\cite{kingma2022vae, rezende2014} is still employed to draw latent samples from \(z = \mu_\phi(x) + \sigma_\phi(x) \odot \epsilon\), \(\epsilon \sim \mathcal{N}(0,I)\), enabling back‑propagation through the stochastic sampling.
We then optimize \(\phi, \theta\) to maximize the empirical expectation of \(\mathcal{L}(x)\) over the dataset.
The  overall flowchart structure is illustrated in \Cref{fig:method_flowchart}

\begin{figure}[!htb]
    \centering
    \includegraphics[width=0.75\linewidth]{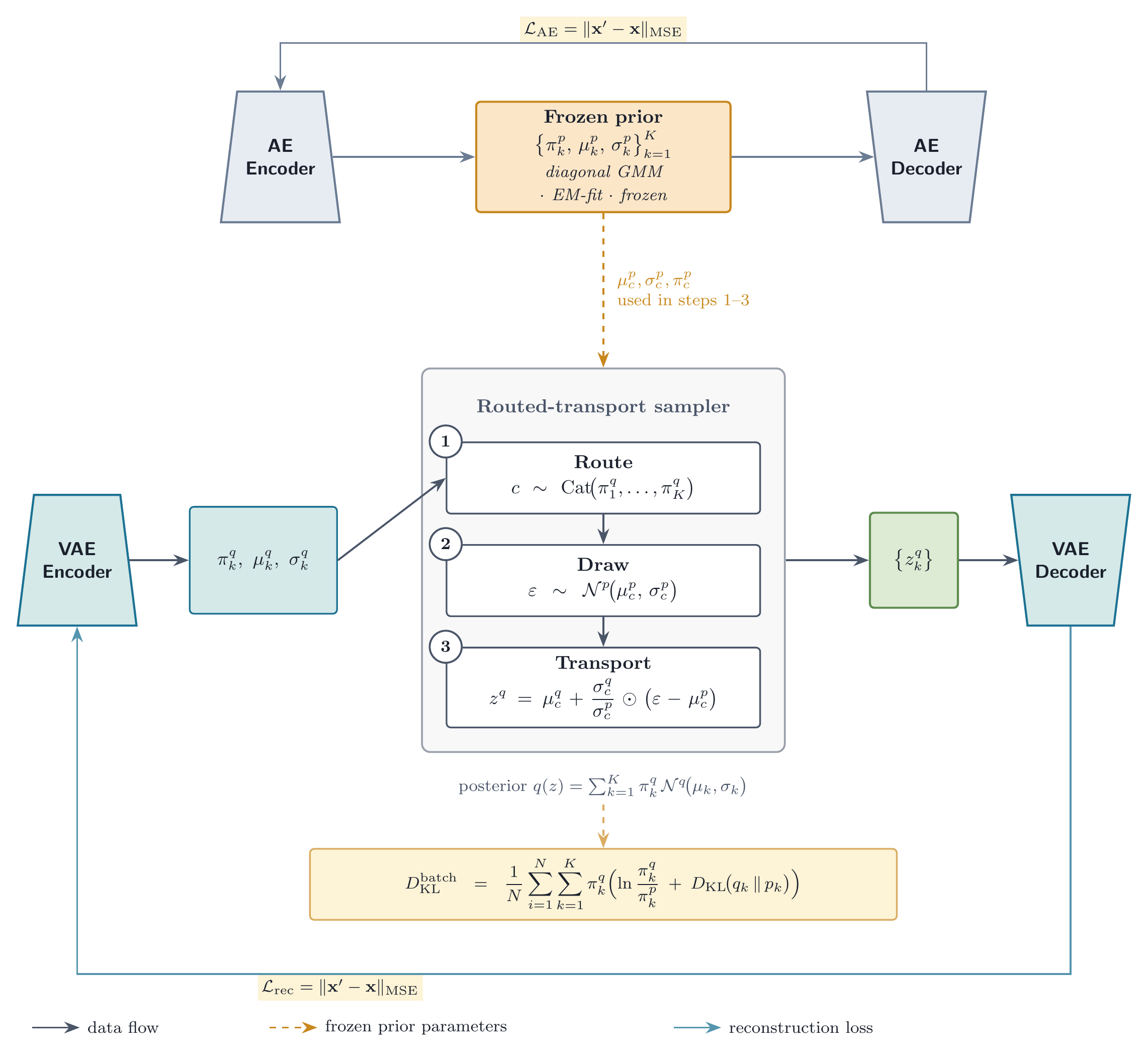}
    \caption{Architecture of the proposed X-VAE. \emph{Top:} a deterministic
autoencoder is trained first, and a $K$-component diagonal Gaussian mixture is
fit once on its latent codes to give the fixed prior
$\{\pi^{p}_{k},\mu^{p}_{k},\sigma^{p}_{k}\}_{k=1}^{K}$~\eqref{eq:gmm-prior},
which is frozen during VAE training. \emph{Bottom:} the VAE encoder emits a
per-coordinate Gaussian posterior and routing weights
$(\pi^{q}_{k},\mu^{q}_{k},\sigma^{q}_{k})$; each latent coordinate is then formed
by the routed-transport sampler---route a component $c\sim\mathrm{Cat}(\pi^{q})$,
draw $\varepsilon\sim\mathcal{N}^{p}(\mu^{p}_{c},\sigma^{p}_{c})$, and transport
it to $z^{q}=\mu^{q}_{c}+(\sigma^{q}_{c}/\sigma^{p}_{c})\odot(\varepsilon-\mu^{p}_{c})$~\eqref{eq:code}.
Training combines the reconstruction MSE with the closed-form two-part mixture
KL~\eqref{eq:kl} (a categorical term plus a per-component Gaussian term). Dashed
arrows mark the frozen prior parameters; solid arrows mark data flow.}
    \label{fig:method_flowchart}
\end{figure}

For the second approach, we maximize the usual ELBO with the fixed mixture prior~\Cref{eq:gmm-prior} in place of $\Ncal(0,I)$,
\begin{equation}
    \mathcal{L}(x)=\E_{q_\phi(z|x)}\big[\log p_\theta(x|z)\big]-\KL\!\big(q_\phi(z|x)\,\big\|\,p(z)\big),
\label{eq:elbo}
\end{equation}
and bound the mixture KL by its sampled upper bound (\Cref{lem:mixkl}),
\begin{equation}
    \KL\big(q_\phi(z|x)\,\big\|\,p(z)\big)\;\le\;
    \KL(\pi^q\,\|\,\pi^p)+\sum_{k=1}^{K}\pi^q_k\,\KL\!\big(\Ncal^q_k\,\big\|\,\Ncal^p_k\big),
\label{eq:kl}
\end{equation}
where $\Ncal^p_k=\Ncal(\mu^p_k,\sigma^{p2}_k)$ and each Gaussian term is closed form (~\Cref{lem:gausskl}),
\begin{equation}
    \KL\!\big(\Ncal^q_k\,\big\|\,\Ncal^p_k\big)
    =\log\frac{\sigma^p_k}{\sigma^q_k}+\frac{\sigma^{q2}_k+(\mu^q_k-\mu^p_k)^2}{2\,\sigma^{p2}_k}-\frac12.
\label{eq:gausskl}
\end{equation}
The decoder term in~\Cref{eq:elbo} is the usual Gaussian/Bernoulli reconstruction loss, and the
transport~\Cref{eq:transport} preserves the reparameterization gradient so that
$\phi,\theta$ are trained end to end.

At generation time the encoder is not used, each coordinate is drawn from its routed prior marginal,
\begin{equation}
    c_k\sim\Cat(\pi^p),\qquad z_k\sim\Ncal\!\big(M^p_{c_k,k},\,S^{p2}_{c_k,k}\big),
\label{eq:gen}
\end{equation}
and $\mathbf{z}=\{z_k\}$ is decoded. Because the prior is anchored to the AE manifold, samples remain
faithful to the data's cluster structure.
Optionally, during generation we also introduce a scalar \(s > 0\) to modulate the variance, i.e., use
\(\mathcal{N}( \mu_{\mathrm{AE}}, \, \mathrm{diag}((s \, \Sigma_{\mathrm{AE}})^2) )\), to control sample diversity and quality.
After training, when generating novel samples, we sample from  
\begin{equation}
\tilde p(z) = \mathcal{N}\bigl(z; \mu_{\mathrm{AE}}, \,\mathrm{diag}((\alpha\, \sigma_{\mathrm{AE}})^2)\bigr),
\end{equation}
where \(\alpha\) is a hyper‑parameter (e.g., \(\alpha \ge 1\)) that controls generation variance. This allows to tune between sample diversity (larger \(\alpha\)) and fidelity (smaller \(\alpha\)).
A small $\alpha$ near $0$ will lead a small standard deviation that cause the generation close to the average while a large on could lead a more diverse generation. 
By scaling the latent standard deviation or shifting the mean allows controlled exploration, increase the size of sample space it can draw from to improve the generative diversity.

\section{Experiments}
\label{sec:experiments}

We mainly evaluate X-VAE on MNIST ($28\times28$)~\cite{lecun1989}, CelebA ($64\times64$)~\cite{liu2015faceattributes}, and the synthetic clustered~\cite{yacoby2022failuremodesvariationalautoencoders} benchmark is discussed separately below.
The Synthetic clusters has fully known structure and distributions, $K=3$ matched to the ground-truth modes.
A controlled diagnostic for whether the mixture prior recovers and the routed-transport posterior preserves known discrete structure, with no image-complexity confound.
MNIST is the intermediate case, real pixels but with a known, well-separated discrete class structure (ten digits).
It helps testing whether the controlled-condition behavior transfers to real images.
CelebA is the realistic, hardest regime, since there is no clean categories (continuous, overlapping attributes), so the number of modes and latent geometry are unknown and must be inferred from data.
This is where FID is decisive and where the configurations, routing or transport split, are compared to pick the best model.
Training proceeds in two stages.
At the first stage, a deterministic autoencoder is trained, and its latent codes are used to estimate the prior in one of two ways.
(i) the per-coordinate empirical mean and standard deviation, giving a single data-driven Gaussian $\Ncal(\mu_{AE},\sigma_{AE}^2)$ that replaces $\Ncal(0,I)$ (the single-Gaussian X-VAE) follows by the \Cref{equ:single_gaussian} or (ii) a $K$-component diagonal Gaussian mixture~\Cref{eq:gmm-prior} which, together with the dimension-wise routed-transport posterior, forms the full X-VAE.
A VAE with this fixed prior is then trained by maximizing the ELBO~\Cref{eq:elbo}.
The VAE and AE architectures need not coincide, but their latent dimensionalities must match so that the latent statistics transfer, which are illustrated in detailed architectures and settings in \Cref{imp_detail}.

We compare against a standard VAE~\cite{kingma2022vae}, a VampPrior VAE~\cite{tomczak2018vamp}, a hierarchical VAE~\cite{vahdat2021Hierarchy,sonderby2016ladder}, a standard mixture-of-Gaussians prior VAE, and a Gumbel--Softmax GM-VAE~\cite{jang2017gumbel,dilokthanakul2017}.
All models share the latent budget $K$.
Our method exposes two families of discrete choices, the routing of the anchored coordinate, which are weights $\boldsymbol{\pi}\!\in\!\{\pi^p,\pi^q\}$ and Gaussian source from either $\Ncal^p$ or $\Ncal^q$.
To maintain the same latent dimension of all models, our methods will also keep to total latent dimension $K$.
Among the $K$ dimensions, $n$ out of $K$ will be transport split and the rest will be drawn from the prior.
For instance, the $(K{-}n) + n$ variants, and the transport split, the fraction of the $(K{-}n)$ latent coordinates that carry the encoder transport, the rest $n$ drawn from the prior as illustrated in \Cref{eq:code} and \Cref{fig:method}.
The splits will include $K {-} 1 + 1$, $\frac{K}{3}$, $\frac{2K}{3}$, and $K$ dimensional transport without any drawn for the second approach.
We report reconstruction MSE, Fr\'echet Inception Distance (FID)~\cite{heusel2017fid} computed with Inception-V3~\cite{szegedy2015inception}, the Inception Score (IS)~\cite{salimans2016is}, and the train/test objective with its reconstruction and KL terms.
As the full sweep contains many configurations, the tables below report the baselines together with three of our strongest configurations with the complete per-model results appear in \Cref{app:fullresults}.

\Cref{tab:loss} illustrated the final-epoch train/test objective with its reconstruction and KL terms.
For the routed-transport models the KL is the sum of two terms~\Cref{eq:kl}.
A categorical term $\KL(\pi^q\,\|\,\pi^p)$ that matches the batch component-assignment weights to the prior weights, and a per-component Gaussian term $\sum_k \pi^q_k\,\KL(\Ncal^q_k\,\|\,\Ncal^p_k)$, the value reported in the table is their sum.
On CelebA our configurations attain the lowest test objective of any model ($164.65$, below the standard VAE's $165.19$ and VampPrior's $166.22$) while keeping a moderate KL ($\approx56$); the hierarchical VAE reaches the smallest KL ($48.58$) but pays for it with a much larger reconstruction term ($124.64$) and worse FID.
On MNIST the objectives are tightly clustered ($28.5$--$28.8$ for our models, hierarchical VAE lowest at $28.10$).
The GM-VAE is a clear outlier on both datasets ($69.17$ test on MNIST, $232.07$ on CelebA).
Its enormous KL ($59.33$, $128.21$) explains its poor FID despite the sharpest reconstructions, as illustrated in \Cref{fig:select_recon_mnist_celeba} with complete reconstructions of images in \Cref{app:full_recon}.
As illustrated in \Cref{fig:loss-mnist}, \Cref{fig:loss-celeba}, the losses for all models are successfully converge at the end epoch.
\begin{table}[!htb]
\centering\footnotesize
\setlength{\tabcolsep}{4pt}
\caption{Final-epoch train/test objective with reconstruction and KL terms on MNIST and CelebA. Lowest
\emph{test} objective per dataset in bold; magnitudes are not comparable across datasets. For our
routed-transport models the KL is the sum of a categorical and a per-component Gaussian term
(\Cref{eq:kl}).}
\label{tab:loss}
\begin{tabular}{@{}l cccc cccc@{}}
\toprule
 & \multicolumn{4}{c}{MNIST ($K{=}64$)} & \multicolumn{4}{c}{CelebA ($K{=}256$)} \\
\cmidrule(lr){2-5}\cmidrule(lr){6-9}
Model & train & test & recon & KL & train & test & recon & KL \\
\midrule
Standard VAE       & 28.96 & 28.85 & 16.67 & 12.29 & 167.34 & 165.19 & 111.75 & 55.59 \\
VampPrior VAE      & 28.59 & 28.30 & 17.51 & 11.08 & 166.09 & 166.22 & 111.51 & 54.57 \\
Hierarchical VAE   & 28.31 & $\mathbf{28.10}$ & 16.64 & 11.67 & 173.22 & 170.79 & 124.64 & 48.58 \\
Standard MoG VAE   & 29.50 & 29.24 & 17.70 & 11.80 & 167.87 & 167.96 & 112.89 & 54.98 \\
GM-VAE             & 69.37 & 69.17 & 10.03 & 59.33 & 231.67 & 232.07 & 103.47 & 128.21 \\
\midrule
Ours, $(K{-}1)+1$ ($\pi^q,\Ncal^p$)   & 28.82 & 28.61 & 16.70 & 12.13 & 166.64 & 164.82 & 110.61 & 56.03 \\
Ours, transport $\tfrac{2}{3}K$   & 29.08 & 28.75 & 16.98 & 12.10 & 166.59 & 165.42 & 110.58 & 56.01 \\
Ours, $(K{-}1)+1$ ($\pi^q,\Ncal^q$)   & 28.91 & 28.56 & 16.88 & 12.04 & 167.88 & $\mathbf{164.65}$ & 111.95 & 55.93 \\
\bottomrule
\end{tabular}
\end{table}

\begin{figure}[!htb]
    \centering
    \includegraphics[width=0.47\linewidth]{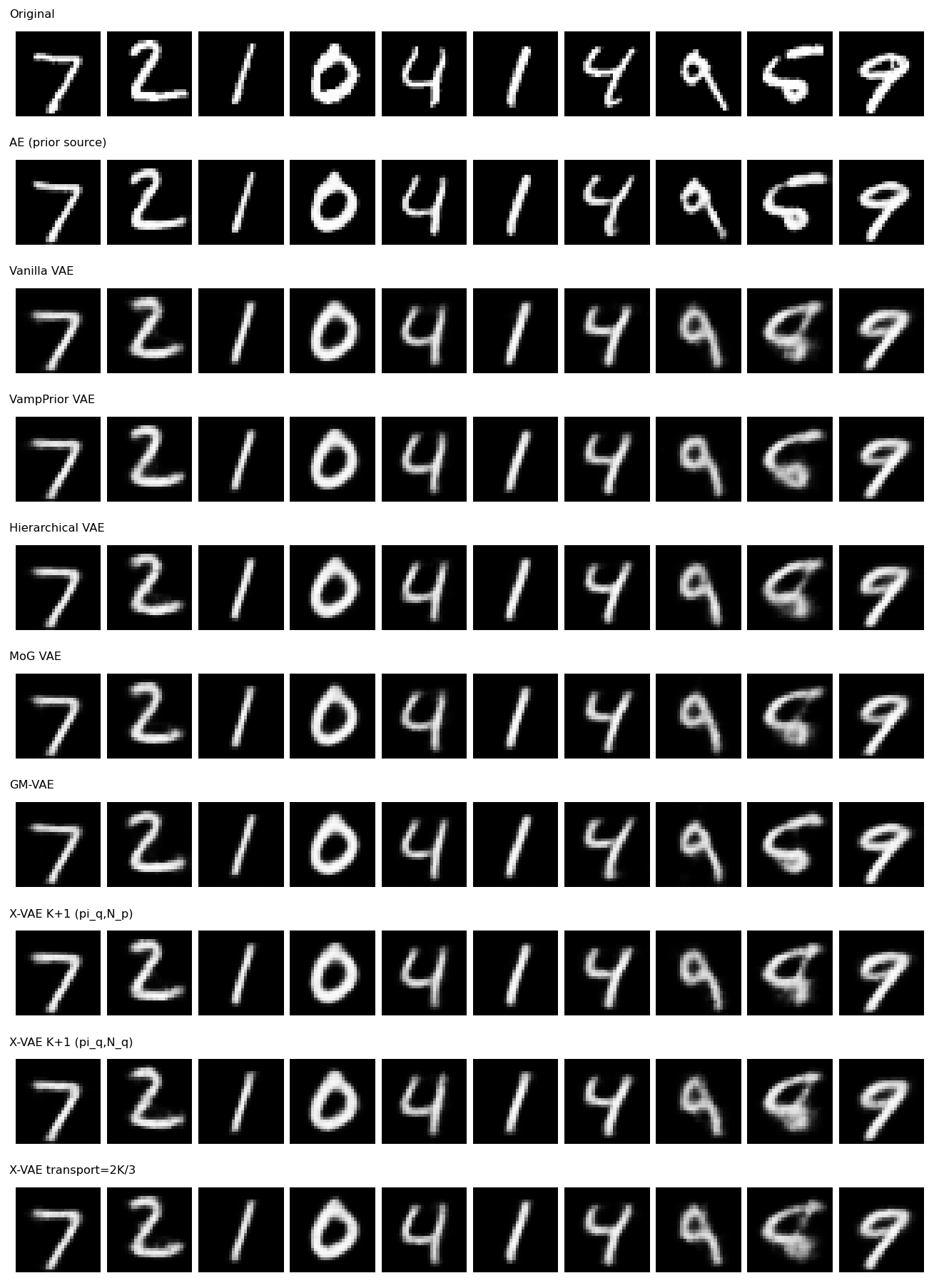}
    \includegraphics[width=0.47\linewidth]{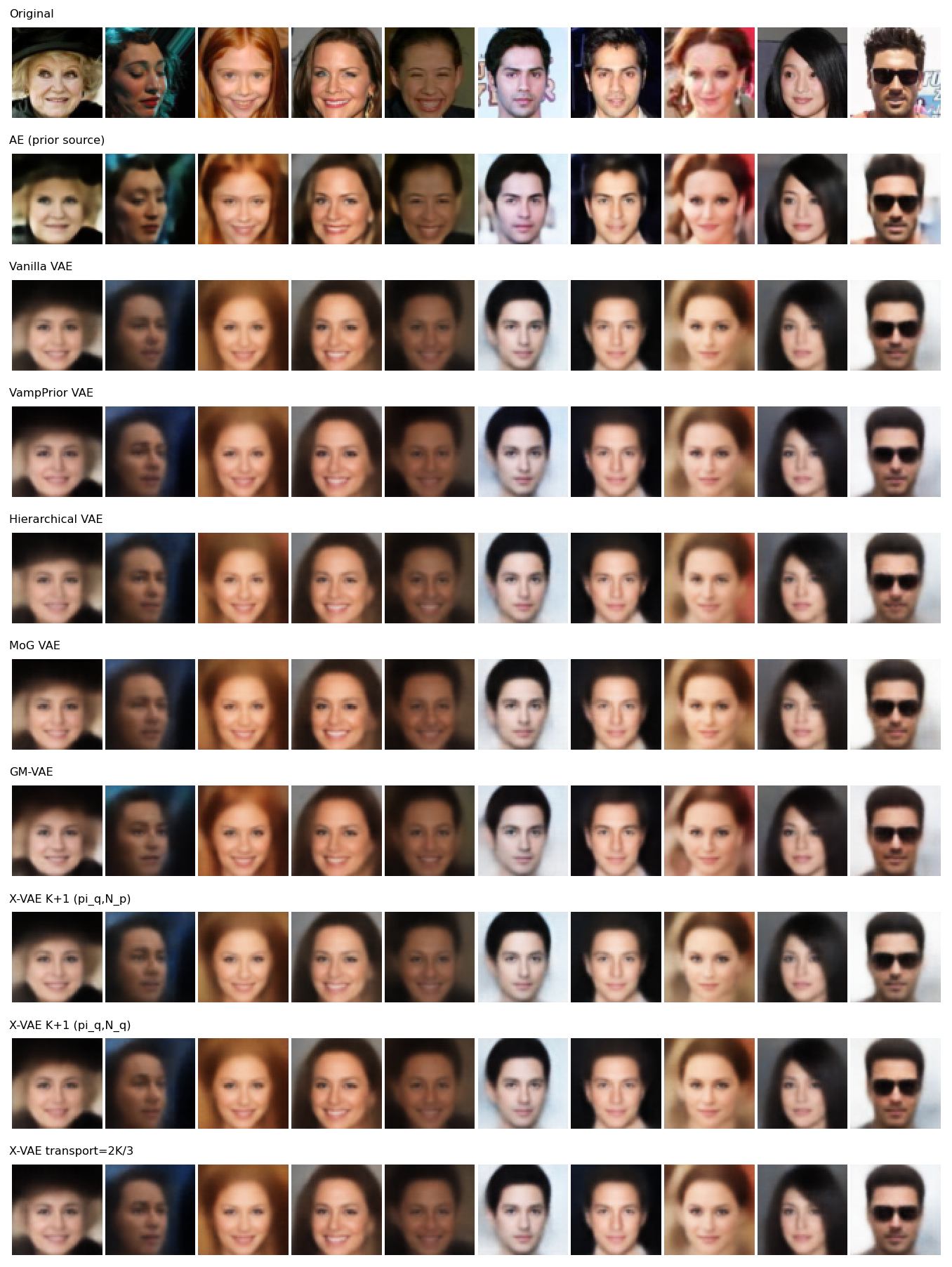}
    \caption{Left: MNIST reconstructions, Right: Celeba reconstructions}
    \label{fig:select_recon_mnist_celeba}
\end{figure}

\begin{table}[!htb]
\centering\small
\setlength{\tabcolsep}{4.5pt}
\caption{Reconstruction error (MSE), FID, and Inception Score (IS) on MNIST and CelebA. Lower is better
for MSE and FID, higher for IS; best per column in bold. We list the baselines and three of our strongest configurations; the complete per-model sweep is in \Cref{app:fullresults}.}
\label{tab:quality}
\begin{tabular}{@{}l ccc ccc@{}}
\toprule
 & \multicolumn{3}{c}{MNIST ($K{=}64$)} & \multicolumn{3}{c}{CelebA ($K{=}256$)} \\
\cmidrule(lr){2-4}\cmidrule(lr){5-7}
Model & Rec$\downarrow$ & FID$\downarrow$ & IS$\uparrow$ & Rec$\downarrow$ & FID$\downarrow$ & IS$\uparrow$ \\
\midrule
Standard VAE       & 11.24 & 49.67 & $2.382{\pm}.068$ & 87.42 & 86.34 & $1.816{\pm}.052$ \\
VampPrior VAE      & 12.12 & 47.76 & $2.460{\pm}.093$ & 90.27 & 82.80 & $\mathbf{1.858}{\pm}.047$ \\
Hierarchical VAE   & 11.89 & \textbf{46.64} & $2.411{\pm}.065$ & 104.60 & 87.09 & $1.806{\pm}.073$ \\
Standard MoG VAE   & 12.88 & 58.27 & $\mathbf{2.631}{\pm}.101$ & 92.63 & 87.49 & $1.795{\pm}.034$ \\
GM-VAE             & \textbf{4.98} & 79.08 & $2.062{\pm}.063$ & \textbf{74.99} & 89.14 & $1.781{\pm}.037$ \\
\midrule
Ours, $K{-}1+1$ ($\pi^q,\Ncal^p$)   & 11.89 & 47.41 & $2.402{\pm}.071$ & 87.99 & $\mathbf{80.40}$ & $1.772{\pm}.060$ \\
Ours, transport $\tfrac{2}{3}K$   & 11.65 & 47.54 & $2.382{\pm}.106$ & 88.24 & 81.90 & $1.807{\pm}.070$ \\
Ours, $K{-}1+1$ ($\pi^q,\Ncal^q$)   & 11.60 & 47.21 & $2.411{\pm}.079$ & 88.40 & 83.11 & $1.797{\pm}.032$ \\
\bottomrule
\end{tabular}
\end{table}

\Cref{tab:quality} reports reconstruction MSE, FID, and IS on MNIST and CelebA.
On CelebA, our $K{-}1+1$ $(\pi^q,\Ncal^p)$ configuration attains the best FID of any model ($80.40$), ahead of the strongest baseline (VampPrior, $82.80$) and well ahead of the standard VAE ($86.34$).
A second configuration (transport $\tfrac{2}{3}K$, FID $81.90$) also surpasses every baseline.
On MNIST, our configurations improve clearly over the standard VAE (FID $47.2$--$47.5$ vs.\ $49.67$) and over VampPrior ($47.76$), and the single best configuration in the full sweep, transport $\tfrac{1}{3}K$, FID $46.01$ (\Cref{app:fullresults}), edges the strongest baseline (hierarchical VAE, $46.64$).
The Gumbel--Softmax GM-VAE shows a pronounced reconstruction and generation trade-off.
By far the lowest reconstruction error on both datasets ($4.98$ on MNIST, $74.99$ on CelebA) but the worst FID ($79.08$, $89.14$), consistent with its very large KL term ( \ref{tab:loss}). Across our configurations the metrics move only modestly, indicating robustness to the routing distribution and the transport split. \Cref{fig:recon-mnist},\Cref{fig:gen-mnist} and\Cref{fig:recon-celeba},\Cref{fig:gen-celeba} show qualitative results.
\begin{figure}[!htb]
\centering
\includegraphics[width=\linewidth]{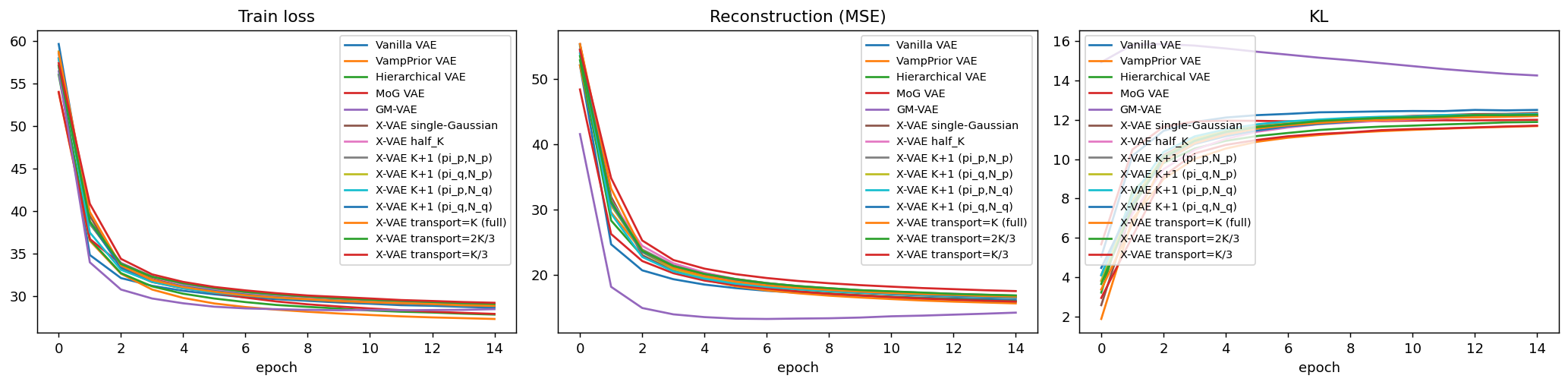}
\caption{MNIST training curves (total / reconstruction / KL vs.\ epoch) for our method and the
baselines.}
\label{fig:loss-mnist}
\end{figure}
\begin{figure}[!htb]
\centering
\includegraphics[width=\linewidth]{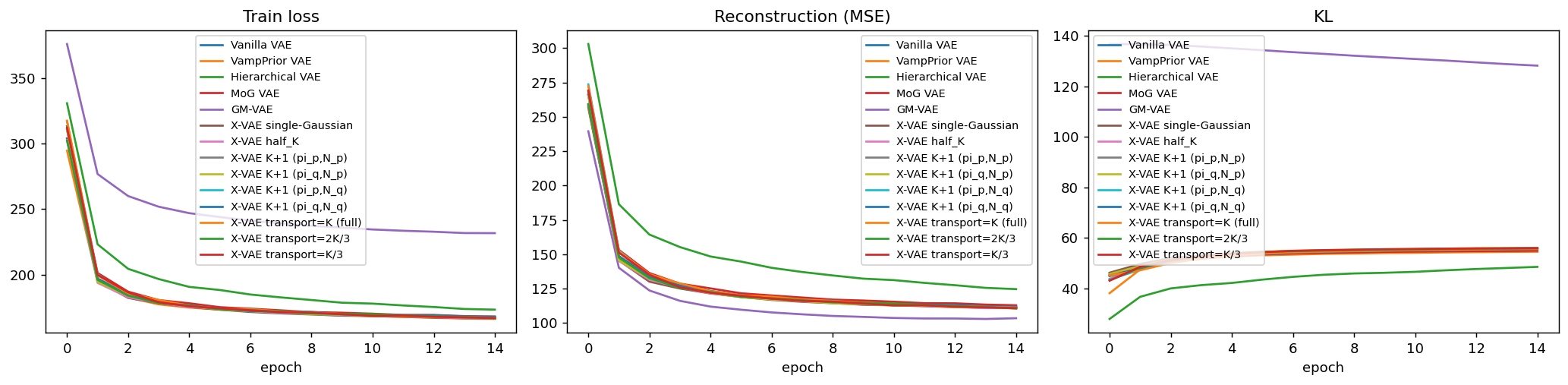}
\caption{CelebA training curves (total / reconstruction / KL vs.\ epoch).}
\label{fig:loss-celeba}
\end{figure}

On the synthetic three-cluster benchmark ($K{=}3$), a standard VAE constrained by an isotropic prior fails to reproduce the multimodal structure, whereas the AE-anchored prior preserves the three modes in both reconstruction and generation (\Cref{fig:recon-clusters} and\Cref{fig:gen-clusters}). \Cref{tab:loss-clusters} reports the final-epoch objective with its reconstruction and KL terms.
Our well-behaved configurations reconstruct as accurately as the baselines (reconstruction $\approx0.011$) but carry a slightly larger KL ($\approx0.040$--$0.042$ vs.\ $\approx0.036$--$0.038$) and hence a marginally higher objective---the price of pulling the aggregated posterior toward the cluster means rather than toward an isotropic prior.
The Gumbel--Softmax GM-VAE reaches the smallest objective ($0.007$) but with a degenerate, near-zero/negative sampled KL: its discrete code carries little information on this low-dimensional data, mirroring the prior--posterior collapse seen on MNIST and CelebA.

A second, configuration-level effect is visible only here. Because $K{=}3$ is small, the configurations that transport a single coordinate---the interleaved model and transport $\tfrac{1}{3}K$, both of which reduce to one transport dimension at $K{=}3$, reconstruct markedly worse (reconstruction $\approx0.09$, objective $\approx0.12$) than configurations that transport two or more coordinates ($\approx0.011$--$0.015$; see \Cref{app:fullresults}).
This directly illustrates that the transported coordinates carry the reconstruction: too small a transport fraction sacrifices fidelity, consistent with the dataset-dependent behavior discussed in \Cref{sec:discussion}.

\begin{table}[!htb]
\centering\footnotesize
\setlength{\tabcolsep}{5pt}
\caption{Clustered data ($K{=}3$): final-epoch train/test objective with reconstruction and KL terms for
the baselines and our three featured configurations. GM-VAE attains the smallest objective but with a
degenerate (near-zero/negative) sampled KL; the full per-model sweep is in \Cref{app:fullresults}.}
\label{tab:loss-clusters}
\begin{tabular}{@{}l cccc@{}}
\toprule
Model & train & test & recon & KL \\
\midrule
Standard VAE       & 0.048 & 0.048 & 0.010 & 0.038 \\
VampPrior VAE      & 0.049 & 0.051 & 0.011 & 0.038 \\
Hierarchical VAE   & 0.047 & 0.047 & 0.011 & 0.036 \\
Standard MoG VAE   & 0.047 & 0.046 & 0.011 & 0.036 \\
GM-VAE             & 0.008 & 0.007 & 0.010 & $-0.002$ \\
\midrule
Ours, $K{-}1+1$ ($\pi^q,\Ncal^p$)   & 0.053 & 0.051 & 0.011 & 0.042 \\
Ours, transport $\tfrac{2}{3}K$   & 0.055 & 0.051 & 0.013 & 0.041 \\
Ours, $K{-}1+1$ ($\pi^q,\Ncal^q$)   & 0.056 & 0.057 & 0.015 & 0.040 \\
\bottomrule
\end{tabular}
\end{table}

\begin{figure}[!htb]
\centering
\includegraphics[width=0.6\linewidth]{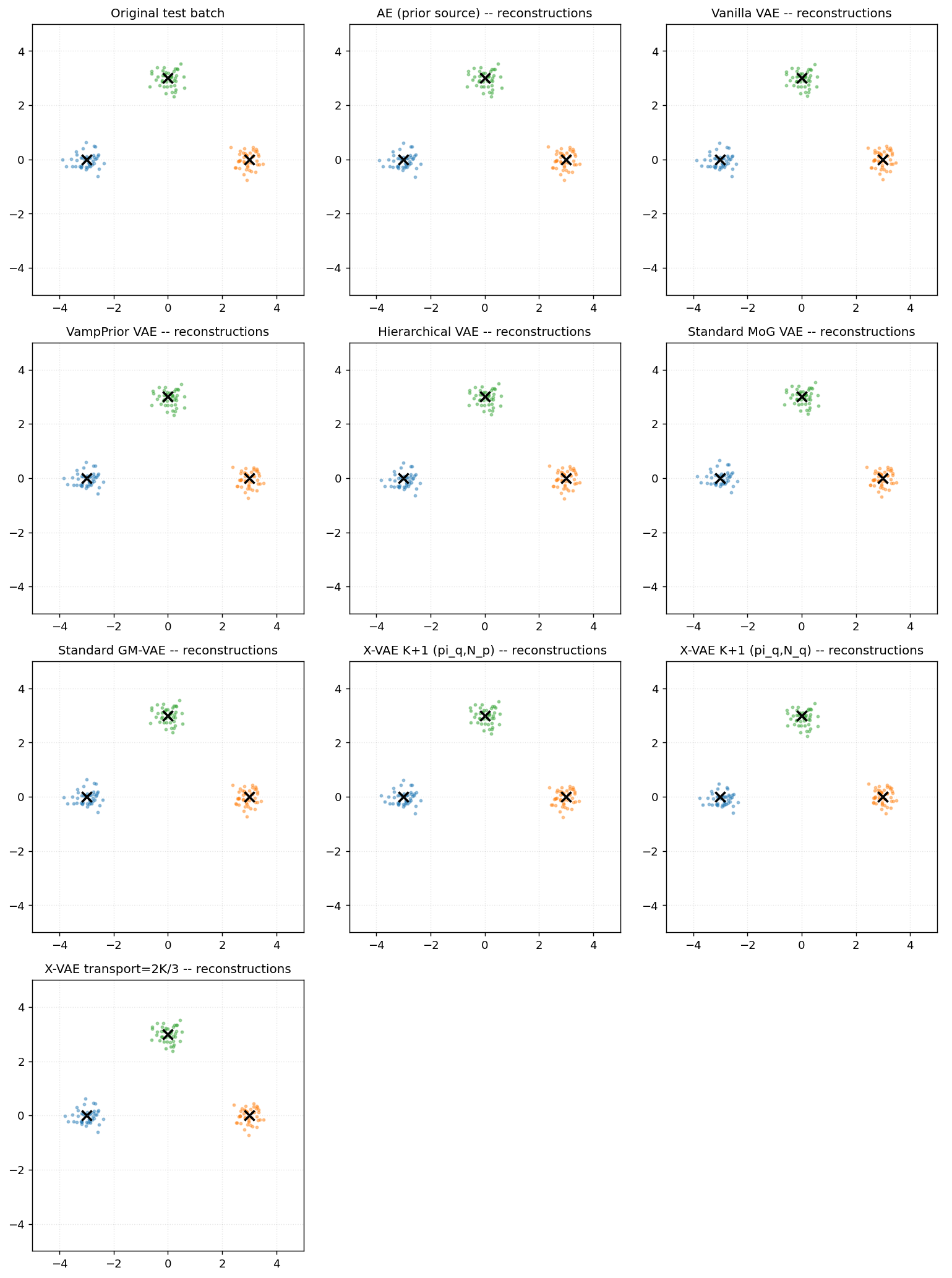}
\caption{Clustered data: original two-dimensional data, reconstructions by our method, and
reconstructions by a standard VAE.}
\label{fig:recon-clusters}
\end{figure}

\begin{figure}[!htb]
\centering
\includegraphics[width=\linewidth]{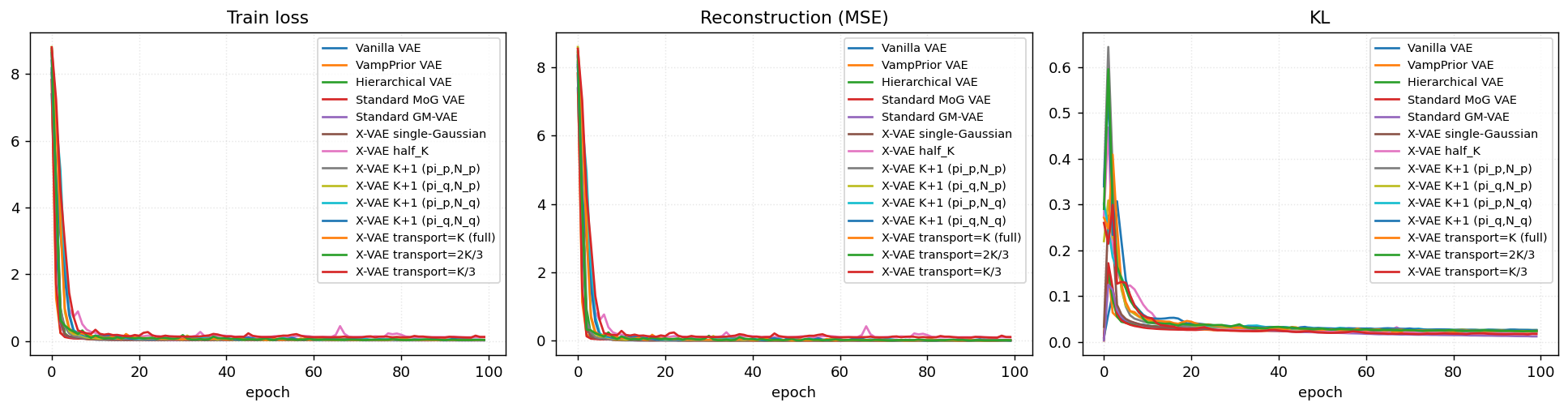}
\caption{Clustered-data training curves (total / reconstruction / KL vs.\ epoch).}
\label{fig:loss-clusters}
\end{figure}

\section{Discussion}
\label{sec:discussion}

The experiments support a clear, and in places strengthened, picture.
Grounding the VAE prior in statistics of the AE latent space, either a single data-driven Gaussian or an AE-fit Gaussian mixture with the routed-transport posterior.
It yields a model that is competitive with, and on the harder benchmark simple and better than, standard baselines, without adding any trainable prior parameters.
On CelebA our best configuration attains the lowest FID of all models ($80.40$, vs.\ $82.80$ for the strongest baseline), and our configurations reach the lowest test objective overall ($164.65$, ~\Cref{tab:loss}), while on MNIST the best configuration in the full sweep also edges the strongest baseline on FID ($46.01$ vs.\ $46.64$), while every configuration improves clearly over the standard VAE.
Because the prior is obtained once from latent statistics with a closed-form KL~\Cref{eq:kl}, these gains come at essentially no additional training cost.

The sample grids mirror the metric story
(\Cref{fig:select_mnist_gen,fig:select_celeba_gen}; the complete per-model grids are in
\Cref{fig:gen-mnist,fig:gen-celeba}). On CelebA the routed-transport configurations produce
coherent, well-formed faces with varied pose, lighting, gender, and expression; their facial geometry
and skin and hair tones are as sharp as and in several rows visibly sharper than the flexible-prior
baselines, consistent with the best-of-all FID ($80.40$, \Cref{tab:quality}). The standard VAE is by
comparison smoother and more washed-out, and GM-VAE, despite the lowest reconstruction error, yields
the least face-like samples: the qualitative signature of its inflated KL and the prior--posterior
collapse already visible in \Cref{tab:loss}.
\begin{figure}[!htb]
    \centering
    \includegraphics[width=0.9\linewidth]{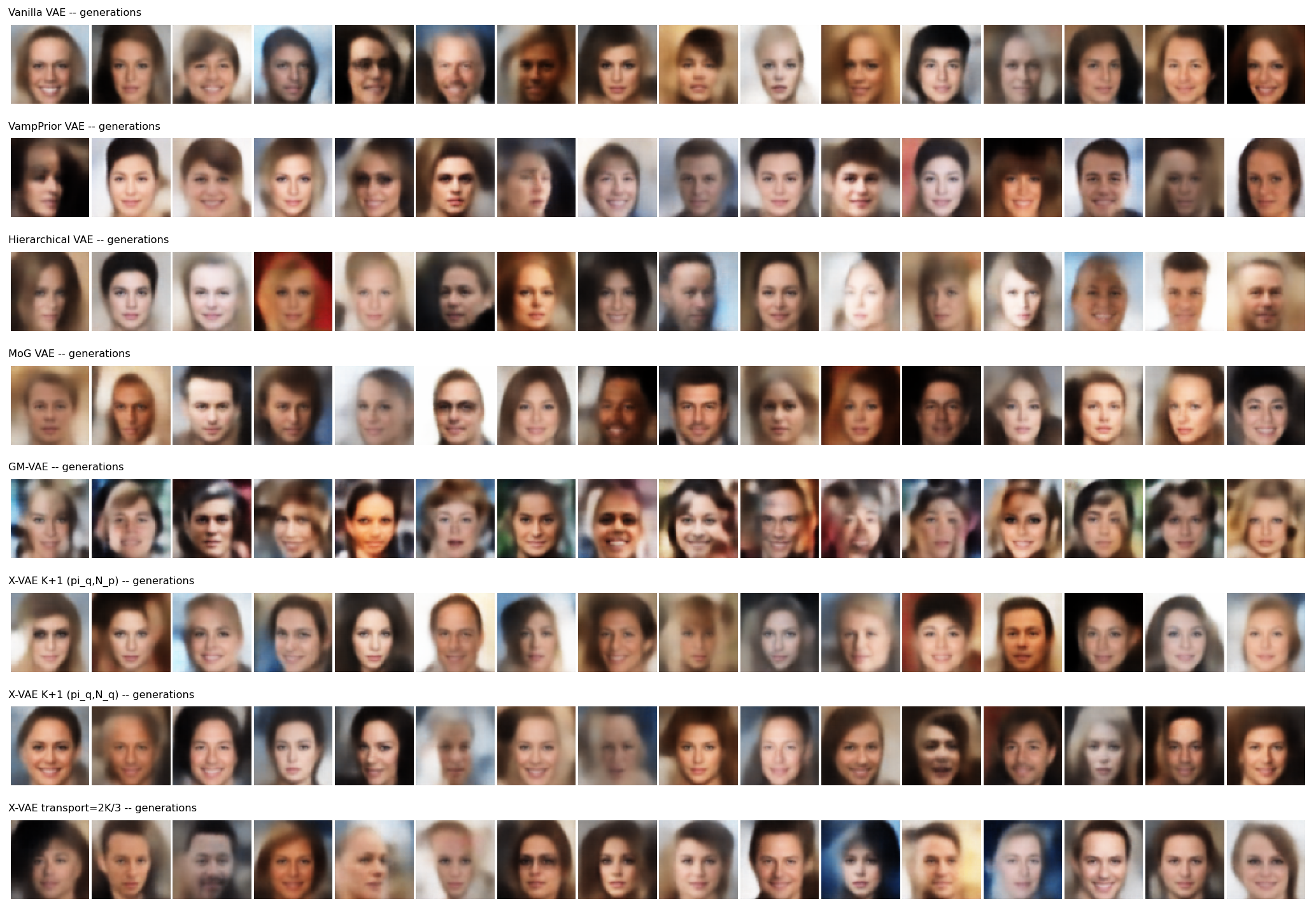}
    \caption{Generated samples for CelebA}
    \label{fig:select_celeba_gen}
\end{figure}

On MNIST the comparison is more even-handed, and the samples make the parity in
\Cref{tab:quality} concrete. Our generations are legible, cover all ten digit classes, and show good
stroke variety, clearly improving on the standard VAE and matching VampPrior and the MoG VAE; but they
do not dominate. The crispest baselines (hierarchical and MoG VAE) render marginally cleaner strokes,
matching their slightly better MNIST FID, and a few of our prior-drawn configurations---most visibly
the single-Gaussian and $(\pi^p,\Ncal^q)$ variants---occasionally emit broken or blobby digits in the
coordinates that are sampled from the prior rather than transported. The cleanest rows are the
$(\pi^q,\Ncal^q)$ and small-transport ($\tfrac{1}{3}K$) configurations, exactly where the best MNIST
FID ($46.01$) sits. We therefore read MNIST as a parity result rather than a win: the AE-anchored prior
delivers a clear gain over the standard VAE without the crispness edge that the parameter-heavy
flexible-prior baselines obtain on this low-resolution, low-diversity benchmark. We attribute the gap
to the limited room a $28\times28$ grayscale manifold leaves for a fixed prior to help---the
reconstruction term, not the prior, dominates the objective here (\Cref{tab:loss})---whereas the
richer CelebA 
\begin{figure}[!htb]
    \centering
    \includegraphics[width=0.9\linewidth]{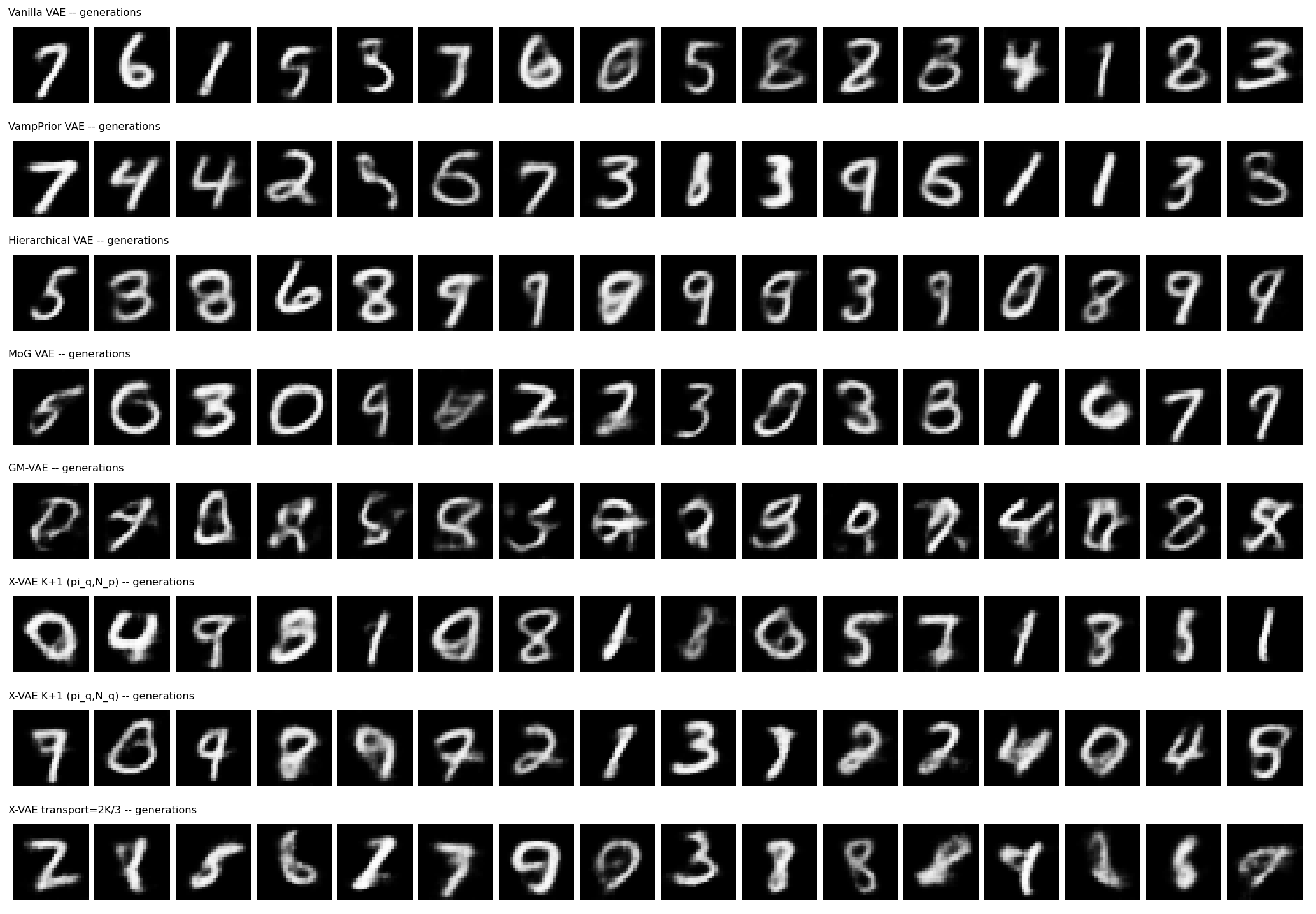}
    \caption{Generated samples for MNIST}
    \label{fig:select_mnist_gen}
\end{figure}

The clustered data exposes the routing-and-split mechanism most directly
(\Cref{fig:gen-clusters}). The well-behaved configurations generate samples that stay inside the three
Gaussian modes, whereas the standard VAE leaks probability mass into the inter-cluster gaps---the
multimodal failure the AE-anchored prior is designed to prevent. The configurations that transport only
a single coordinate (the half-$K$ and transport-$\tfrac{1}{3}K$ panels, both of which reduce to one
transport dimension at $K{=}3$) instead collapse their samples onto a one-dimensional curve threaded
through the cluster centres. This is the visual counterpart of their inflated reconstruction term in
\Cref{tab:loss-clusters}, it confirms that the transported sub-space, not the prior-drawn one, carries
the geometry, so too small a transport fraction sacrifices coverage of the data manifold.

On the synthetic clustered data the AE-anchored prior preserves the three modes that an isotropic prior collapses, at the cost of a slightly larger KL than the baselines, the expected signature of pulling the aggregated posterior toward the cluster means rather than the origin as illustrated in \Cref{fig:select_cluster_gen}. This benchmark also exposes the transport fraction most sharply, at $K{=}3$, the configurations that transport a single coordinate lose reconstruction fidelity (\Cref{tab:loss-clusters}. \Cref{app:fullresults}), confirming that the transported sub-space, not the prior-drawn one, carries the reconstruction.

\begin{figure}[!htb]
    \centering
    \includegraphics[width=0.8\linewidth]{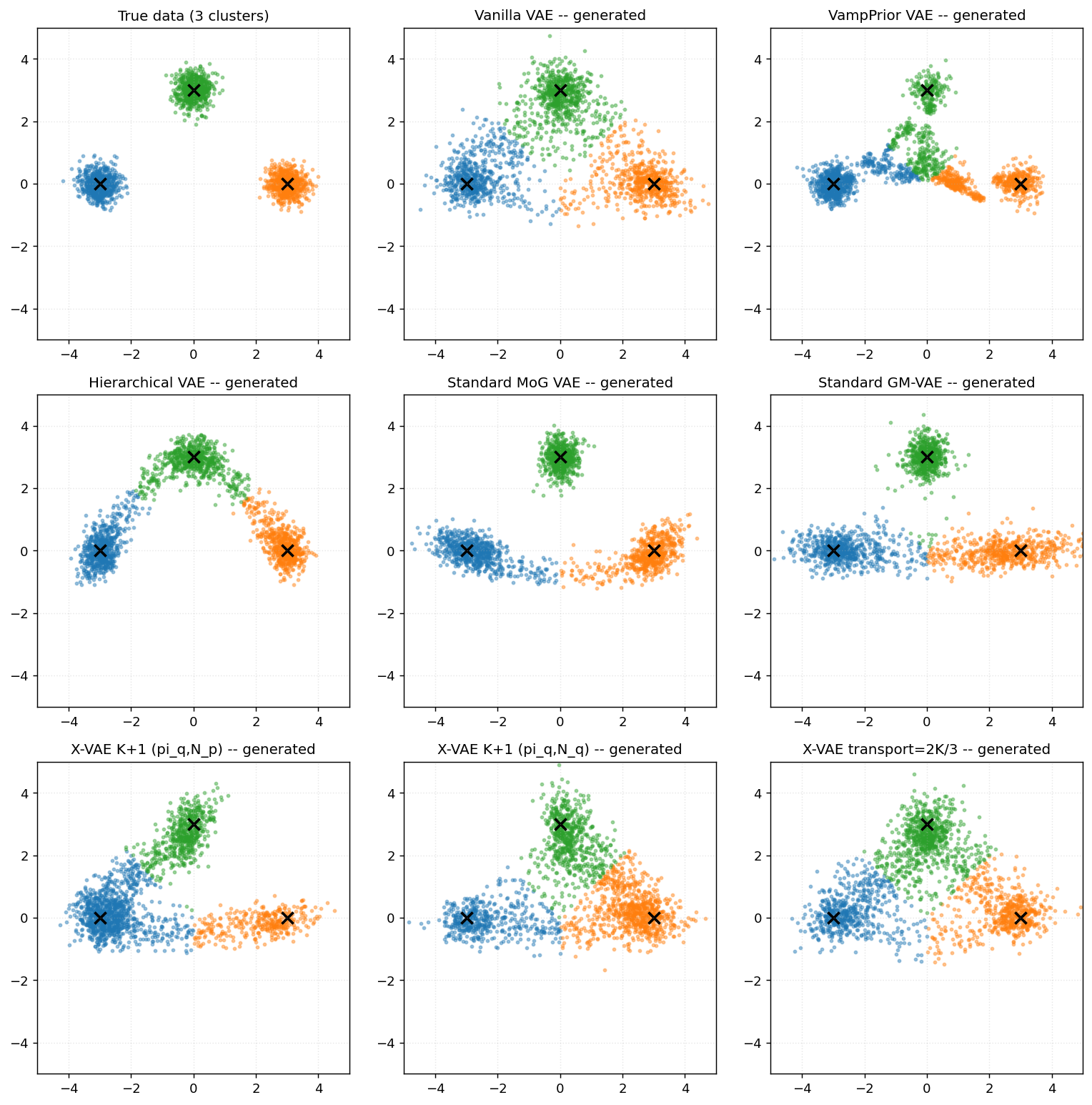}
    \caption{Generated Sample for three clusterings}
    \label{fig:select_cluster_gen}
\end{figure}

No single routing or split is best everywhere, which is itself informative. On CelebA the $K{-}1+1$ routing with the prior-source coupling $(\pi^q,\Ncal^p)$ and a two-thirds transport split are strongest, whereas on MNIST a smaller transport fraction ($\tfrac{1}{3}K$) wins. The spread across our configurations is nonetheless small on each dataset, so the benefit comes from anchoring the latent space to the AE statistics rather than from the precise routing or split: the categorical injects cluster identity (\Cref{prop:routed}) but its exact parameterization is secondary. In practice the transport fraction behaves as a light, dataset-level knob trading reconstruction detail against prior alignment.

For the routed-transport models the KL decomposes into two terms~\Cref{eq:kl}: a categorical term $\KL(\pi^q\,\|\,\pi^p)$ that aligns the batch component-assignment with the prior weights, and a per-component Gaussian term $\sum_k\pi^q_k\,\KL(\Ncal^q_k\,\|\,\Ncal^p_k)$. The first acts as a global consistency term tying the aggregated posterior to the AE-derived mixture, while the second is the familiar per-coordinate regularizer. Empirically the total KL stays moderate (\Cref{tab:loss}), indicating that the posterior remains well aligned to the data-driven prior without collapsing.

Relative to learned-prior approaches---mixture-, flow-, hierarchical-, adversarial-, or diffusion-based priors~\cite{tomczak2018vamp,kingma2017improvingvariationalinferenceinverse,vahdat2021Hierarchy,dilokthanakul2017}---X-VAE estimates its prior once from simple latent statistics with a closed-form KL~\Cref{eq:kl}, keeping training efficient and the likelihood explicit. Because AE embeddings can be learned for any modality, the approach adapts across data types, which makes it attractive for engineering and industrial-design tasks where strong constraints must be enforced while retaining controlled diversity.

For the complexity comparison, \Cref{tab:params} compares trainable parameter counts. The decisive difference lies in the prior: because X-VAE's prior is fixed from latent statistics and stored as non-trainable buffers, it contributes \emph{zero} trainable parameters, whereas the VampPrior pseudo-inputs add $1.6$--$3.9\times10^{5}$, the hierarchical conditional prior $0.7$--$2.8\times10^{5}$, and the learned mixture priors a smaller $1$--$5\times10^{3}$. The single-Gaussian X-VAE therefore carries exactly the trainable footprint of a standard VAE, the smallest in the comparison, while the routed variant adds only a wider encoder head ($3K$ rather than $2K$ outputs). Both are far smaller than the Gumbel--Softmax GM-VAE, whose per-component encoder heads make it roughly $5\times$ larger on CelebA ($23.5$M vs.\ ${\sim}5$M parameters); that much larger model nonetheless yields the worst FID (\Cref{tab:quality}), so X-VAE's improvements do not stem from added capacity. The one cost X-VAE does pay is a one-time autoencoder pretraining ($0.45$M parameters on MNIST, $3.48$M on CelebA, trained before the VAE): the structure that competing methods carry as learned prior weights is instead amortized into this pretraining stage and a closed-form KL, rather than added to the generative model or to per-step inference cost.
 
\begin{table}[t]
\centering\small
\setlength{\tabcolsep}{6pt}
\caption{Trainable parameter counts of the generative model. ``Prior'' isolates the parameters
introduced specifically by the prior---VampPrior pseudo-inputs, the learned mixture parameters of the
MoG/GM-VAE, and the hierarchical conditional-prior network, while X-VAE stores its prior in fixed
buffers and so adds none. Counts include the shared convolutional backbone, which is identical across
all models. X-VAE additionally pretrains an autoencoder ($0.45$M / $3.48$M parameters on MNIST / CelebA),
a one-time first stage not counted here. The GM-VAE's large total comes from its per-component encoder
heads, not from its prior.}
\label{tab:params}
\begin{tabular}{@{}l cc cc@{}}
\toprule
 & \multicolumn{2}{c}{Total trainable ($10^{6}$)} & \multicolumn{2}{c}{Trainable prior ($10^{3}$)} \\
\cmidrule(lr){2-3}\cmidrule(lr){4-5}
Model & MNIST & CelebA & MNIST & CelebA \\
\midrule
Standard VAE        & 0.58 & 4.53  & 0    & 0    \\
VampPrior VAE       & 0.74 & 4.92  & 157  & 393  \\
Hierarchical VAE    & 1.58 & 5.35  & 74   & 280  \\
Standard MoG VAE    & 0.58 & 4.54  & 1.3  & 5.1  \\
GM-VAE              & 2.96 & 23.46 & 1.3  & 5.1  \\
\midrule
X-VAE single-Gaussian & 0.58 & 4.53 & 0 & 0 \\
X-VAE MoG    & 0.71 & 5.58 & 0 & 0 \\
\bottomrule
\end{tabular}
\end{table}

\paragraph{Limitations.} X-VAE relies on the quality of the pretrained autoencoder.
If the AE does not capture meaningful latent structure, the prior may be suboptimal, and because the AE and VAE are trained separately, residual mismatch between their latent spaces can persist~\cite{razavi2019generating}.
From the perspective of architecture, X-VAE introduces an additional dependence on the alignment between AE and VAE latent spaces. Since AE and VAE are trained independently, discrepancies in how they encode data can introduce mismatch between the prior and the learned posterior of VAE~\cite{razavi2019generating}. Ensuring sufficient alignment may require architectural constraints, careful regularization, or additional fine-tuning steps.
For the approach using MoG to fit the latent space, the prior is a diagonal GMM, which can't capture within-mode coordinate correlations, and the framework ties the component count to the latent dimension, so K=64/256 vastly over-specifies the natural number of categories; the EM fit is also fragile in high dimensions.
Thus, future work could explore non-Gaussian priors derived from Autoencoder embeddings, hierarchical extensions, or joint training of the AE and VAE to further enhance performance. Additionally, applying X-VAE to more complex datasets or high-resolution images could reveal further insights into its scalability and robustness~\cite{larsen2016autoencoding, yu2020veridical, james2013islr}.
Furturemore, despite its simplicity, latent scaling introduces several considerations.
The generation-time variance scale $\alpha$ requires care, Large values of $\alpha$ may push latent samples into regions of the latent space that the decoder has not been trained on, potentially producing unrealistic or low-quality outputs, while very small $\alpha$ suppresses diversity.
Scaling the latent noise directly affects the spread of latent vectors input to the decoder. For $\alpha > 1$, the latent samples are further from the prior mean, potentially exploring regions of the latent space that are less frequently visited during training. This can result in outputs with higher diversity or more extreme features. In contrast, $\alpha < 1$ compresses latent samples toward the origin, producing outputs that are more conservative and closer to the “average” learned features. This provides a straightforward mechanism for trading off between novelty and fidelity in generative outputs.
Therefore, similar to $\beta$-VAE, selecting an appropriate $\alpha$ is critical and may require dataset-specific tuning~\cite{higgins2017beta}. 
The $\alpha$ benefits from dataset-specific tuning~\cite{higgins2017beta}.
More fundamentally, a data-dependent prior concentrates latent mass around the training distribution, which can reduce sample diversity and novelty relative to an isotropic prior~\cite{alemi2018fixing,lin2019balancingreconstructionqualityregularisation}.

\section{Conclusions}
\label{sec:conclusion}

In this work, we introduced the eXact-Prior Variational Autoencoder (X-VAE), a generative modeling framework that replaces the conventional isotropic Gaussian prior with a fixed, AE-informed Gaussian mixture and a dimension-wise routed-transport posterior that maps the encoder's per-coordinate Gaussian onto the mixture through a categorical routing of components.
By estimating statistical parameters from the autoencoder embeddings, the proposed method aligns the VAE latent prior with the dataset's empirical structure, thereby reducing the mismatch between the assumed prior and the aggregated posterior distribution~\cite{Bhalodia_2020_ACCV}.

The proposed formulation preserves the standard ELBO, admits a closed-form mixture KL, and binds each latent coordinate to a mixture component to inject cluster structure into the code while incorporating an additional global consistency term that encourages the aggregated posterior to match the autoencoder-derived latent distribution. Furthermore, we introduced a simple yet effective variance-scaling parameter that enables explicit control over the diversity–fidelity trade-off during generation without retraining the model.

Extensive experiments on synthetic clustered data, MNIST, and CelebA datasets demonstrate that X-VAE consistently improves reconstruction fidelity and likelihood metrics compared with standard VAE baselines. In particular, the AE-informed prior leads to better latent manifold alignment and reduces common VAE failure modes when modeling structured or multi-modal data. Empirical results also show that controlled variance scaling can significantly improve generative quality, achieving lower FID scores while maintaining stable latent distributions.


One of the virtues of the present X-VAE is its simplicity. First, all X-VAE (k+1)
formulations have k-dimensional independent Gaussian bases, which are exactly like 
the Vanilla VAE, and we only add an additional GMM model in a single dimension. 
The underlying argument is: if we can match a single Gaussian posterior base to 
that of the prior, the posterior GMM may automatically tune into the prior GMM.
The additional GMM dimension serves only as a guide, so to speak.
Second, we assume that the structure of the posterior GMM has the exact same structure and dimension
as that of the prior GMM. By doing so, it significantly simplifies the KL-divergence expression,
enabling an analytical expression.

Future work will explore several extensions of this framework, including non-Gaussian latent priors derived from autoencoder embeddings, hierarchical or conditional variants of X-VAE, and joint training strategies that further improve alignment between deterministic and probabilistic latent representations~\cite{larsen2016autoencoding}.
We could explore adaptive or learned $\alpha$ schedules, multi-dimensional scaling, or integration with more expressive priors and posterior approximations.
Additionally, systematic evaluation of the impact of $\alpha$ on latent interpolation, disentanglement, and downstream tasks remains an open research direction.
Or investigating hybrid strategies that balance data-dependent priors with controlled latent exploration, for instance, by introducing a tunable scaling factor $\alpha$ on the encoder-derived variance,
\[
z = \mu_\text{AE} + \alpha \cdot \sigma_\text{AE} \odot \epsilon, \quad \epsilon \sim \mathcal{N}(0, I),
\]
combining the benefits of realistic sample alignment with tunable generative diversity. Additionally, integrating these priors into conditional or hierarchical VAE frameworks may further improve both fidelity and flexibility of generated samples.
These directions may enable more expressive and controllable generative models for complex scientific and engineering systems.

\clearpage
\printbibliography

\clearpage
\appendix
\section{Derivation of the Gaussian Mixture-KL Objective}
\label{app:mixkl}

We prove the sampled upper bound~\Cref{eq:kl}. We first establish the general mixture bound and then
specialize to the dimension-wise Gaussian case.

\begin{lemma}[Mixture KL upper bound]
\label{lem:mixkl}
Let $q(z)=\sum_{k=1}^{K}\pi^q_k\,q_k(z)$ and $p(z)=\sum_{k=1}^{K}\pi^p_k\,p_k(z)$ be mixtures with
weights $\pi^q,\pi^p\in\Delta^{K-1}$ and component densities $q_k,p_k$. Then
\begin{equation}
\KL\big(q\,\|\,p\big)\;\le\;\KL(\pi^q\,\|\,\pi^p)+\sum_{k=1}^{K}\pi^q_k\,\KL\big(q_k\,\|\,p_k\big).
\label{eq:mixbound}
\end{equation}
\end{lemma}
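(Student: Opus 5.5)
The plan is to prove \Cref{lem:mixkl} via the chain rule for KL divergence on an augmented space, then use the data-processing (marginalization) inequality. Introduce a discrete index $c\in\{1,\dots,K\}$ and define two joint distributions on $\{1,\dots,K\}\times\mathcal{Z}$:
\[
Q(c,z)=\pi^q_c\,q_c(z),\qquad P(c,z)=\pi^p_c\,p_c(z).
\]
Summing out $c$ gives exactly the mixtures $q$ and $p$ of the statement.

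The first step is to compute $\KL(Q\,\|\,P)$ directly. Since
\[
\log\frac{Q(c,z)}{P(c,z)}=\log\frac{\pi^q_c}{\pi^p_c}+\log\frac{q_c(z)}{p_c(z)},
\]
taking the expectation under $Q$ splits it into two terms. The first is $\sum_c\pi^q_c\log(\pi^q_c/\pi^p_c)=\KL(\pi^q\,\|\,\pi^p)$. The second is $\sum_c\pi^q_c\int q_c\log(q_c/p_c)\,dz=\sum_c\pi^q_c\KL(q_c\,\|\,p_c)$. So $\KL(Q\,\|\,P)$ equals the right-hand side of \eqref{eq:mixbound} exactly. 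This is the chain rule, and it is routine.

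The second step is to show $\KL(q\,\|\,p)\le\KL(Q\,\|\,P)$, i.e.\ that marginalizing out $c$ cannot increase KL. I would do this directly with the log-sum inequality, applied pointwise in $z$:
\[
q(z)\log\frac{q(z)}{p(z)}=\Big(\sum_c a_c\Big)\log\frac{\sum_c a_c}{\sum_c b_c}\le\sum_c a_c\log\frac{a_c}{b_c},
\]
with $a_c=\pi^q_c q_c(z)$ and $b_c=\pi^p_c p_c(z)$. Integrating over $z$ gives the claim. An equivalent route is the chain rule in the other order: $\KL(Q\,\|\,P)=\KL(q\,\|\,p)+\E_{q}\,\KL\big(Q(c|z)\,\|\,P(c|z)\big)$, and the last term is nonnegative.

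The main obstacle is measure-theoretic bookkeeping rather than the inequality itself. Specifically:
\begin{itemize}
\item One needs the conventions $0\log(0/b)=0$ and $a\log(a/0)=+\infty$.
\item If some $\pi^p_c=0$ while $\pi^q_c>0$, or if $q_c$ is not absolutely continuous with respect to $p_c$, the right-hand side is $+\infty$ and the bound holds trivially.
\item Otherwise $Q\ll P$ and all integrals are well defined. The log-sum inequality (a consequence of convexity of $t\log t$ via Jensen) then applies pointwise, and Tonelli justifies exchanging the sum over $c$ with the integral over $z$, since the integrand can be made nonnegative by adding $a_c-b_c$ terms.
\end{itemize}

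I would state these cases briefly and then combine the two steps to conclude \eqref{eq:mixbound}. Specializing $q_k,p_k$ to the per-coordinate Gaussians $\Ncal^q_k,\Ncal^p_k$ then gives \eqref{eq:kl}.
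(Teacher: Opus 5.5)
Your proof is correct and follows essentially the same route as the paper: you augment with the component index, identify the joint KL as $\KL(\pi^q\,\|\,\pi^p)+\sum_k\pi^q_k\KL(q_k\,\|\,p_k)$, and bound the marginal KL by the joint KL. The paper gets that last bound from the chain rule with a nonnegative conditional term, and your pointwise log-sum argument is the same fact in unpacked form. Your degenerate-case bookkeeping is extra care the paper omits; note only that a component with $\pi^q_c=0$ contributes $0$ under the convention $0\cdot\infty=0$, so a failure of absolute continuity there does not make the right-hand side infinite.
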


\begin{proof}
Introduce the latent component index $k$ and define the joint distributions
$q(z,k)=\pi^q_k\,q_k(z)$ and $p(z,k)=\pi^p_k\,p_k(z)$, whose $z$-marginals are exactly $q$ and $p$.
By the chain rule for the KL divergence,
\begin{equation}
\KL\big(q(z,k)\,\|\,p(z,k)\big)=\KL\big(q(z)\,\|\,p(z)\big)+\E_{q(z)}\!\big[\KL\big(q(k\,|\,z)\,\|\,p(k\,|\,z)\big)\big].
\label{eq:chain1}
\end{equation}
Because the conditional KL term in~\Cref{eq:chain1} is non-negative,
\begin{equation}
\KL\big(q(z)\,\|\,p(z)\big)\;\le\;\KL\big(q(z,k)\,\|\,p(z,k)\big).
\label{eq:marg}
\end{equation}
Applying the chain rule in the other order, $q(z,k)=\pi^q_k\,q_k(z)$ and $p(z,k)=\pi^p_k\,p_k(z)$ give
\begin{align}
\KL\big(q(z,k)\,\|\,p(z,k)\big)
&=\sum_{k=1}^{K}\pi^q_k\!\int q_k(z)\,\log\frac{\pi^q_k\,q_k(z)}{\pi^p_k\,p_k(z)}\,dz \nonumber\\
&=\sum_{k=1}^{K}\pi^q_k\log\frac{\pi^q_k}{\pi^p_k}
 +\sum_{k=1}^{K}\pi^q_k\!\int q_k(z)\log\frac{q_k(z)}{p_k(z)}\,dz \nonumber\\
&=\KL(\pi^q\,\|\,\pi^p)+\sum_{k=1}^{K}\pi^q_k\,\KL\big(q_k\,\|\,p_k\big).
\label{eq:chain2}
\end{align}
Combining~\Cref{eq:marg} and~\Cref{eq:chain2} yields~\Cref{eq:mixbound}.
\end{proof}

\paragraph{Specialization.}
In our model both mixtures factorize over coordinates and, for coordinate $k$, the relevant
components are the Gaussians $q_k=\Ncal^q_k=\Ncal(\mu^q_k,\sigma^{q2}_k)$ and
$p_k=\Ncal^p_k=\Ncal(\mu^p_k,\sigma^{p2}_k)$.
Substituting into~\Cref{eq:mixbound}
gives exactly the training bound~\Cref{eq:kl}. The tighter pairwise estimate of
\cite{hershey2007approximating},
$\KL(q\|p)\le\sum_k\pi^q_k\log\big(\pi^q_k/\sum_j\pi^p_j e^{-\KL(q_k\|p_j)}\big)$, may be substituted
when components overlap; we use the simpler bound~\Cref{eq:mixbound}.

\section{Closed-form Gaussian KL}
\label{app:gausskl}

\begin{lemma}[Univariate Gaussian KL]
\label{lem:gausskl}
For $q=\Ncal(\mu_1,\sigma_1^2)$ and $p=\Ncal(\mu_2,\sigma_2^2)$,
\begin{equation}
\KL(q\,\|\,p)=\log\frac{\sigma_2}{\sigma_1}+\frac{\sigma_1^2+(\mu_1-\mu_2)^2}{2\sigma_2^2}-\frac12.
\label{eq:gauss-app}
\end{equation}
\end{lemma}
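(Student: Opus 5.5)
The plan is to compute $\KL(q\,\|\,p)=\E_{q}[\log q(z)-\log p(z)]$ directly from the explicit univariate Gaussian densities. Everything reduces to the first two moments of $q$, so no integration beyond $\E_q[z]$ and $\E_q[z^2]$ is needed. First, write out the log-density of each Gaussian:
\begin{equation*}
\log q(z)=-\tfrac12\log(2\pi\sigma_1^2)-\frac{(z-\mu_1)^2}{2\sigma_1^2},\qquad
\log p(z)=-\tfrac12\log(2\pi\sigma_2^2)-\frac{(z-\mu_2)^2}{2\sigma_2^2}.
\end{equation*}
The $\log(2\pi)$ terms cancel in the difference. The normalizing constants then combine to $\tfrac12\log(\sigma_2^2/\sigma_1^2)=\log(\sigma_2/\sigma_1)$, which gives the first term of \Cref{eq:gauss-app}.

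Next, take expectations of the two quadratic terms under $z\sim\Ncal(\mu_1,\sigma_1^2)$. The term from $q$ is immediate: $\E_q[(z-\mu_1)^2]=\sigma_1^2$, so it contributes $-\tfrac12$. For the term from $p$, decompose $z-\mu_2=(z-\mu_1)+(\mu_1-\mu_2)$ and expand the square. The cross term has zero expectation because $\E_q[z-\mu_1]=0$. This leaves $\E_q[(z-\mu_2)^2]=\sigma_1^2+(\mu_1-\mu_2)^2$, which contributes $+\frac{\sigma_1^2+(\mu_1-\mu_2)^2}{2\sigma_2^2}$. Summing the three pieces gives exactly \Cref{eq:gauss-app}.

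There is no genuine obstacle here. The only step needing care is the bias--variance decomposition of $\E_q[(z-\mu_2)^2]$, together with keeping the signs straight: $\KL(q\|p)$ takes $\log q$ minus $\log p$, so the $p$-quadratic enters with a plus sign. Afterward I would note two consequences. The per-coordinate form \Cref{eq:gausskl} is this lemma applied with $(\mu_1,\sigma_1)=(\mu^q_k,\sigma^q_k)$ and $(\mu_2,\sigma_2)=(\mu^p_k,\sigma^p_k)$. The single-Gaussian closed form in \Cref{sec:method} follows by summing over independent coordinates, since KL is additive over product measures. That second form should carry the square $(\mu_{\phi,j}-\mu_{\mathrm{AE},j})^2$.
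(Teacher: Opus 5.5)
Your proof is correct and matches the paper's argument essentially step for step: write the log-density ratio, then take expectations under $q$ using $\E_q[(z-\mu_1)^2]=\sigma_1^2$ and $\E_q[(z-\mu_2)^2]=\sigma_1^2+(\mu_1-\mu_2)^2$, where you justify the second identity by the bias--variance split and the paper simply asserts it. Your side remark is also right: the single-Gaussian KL in \Cref{sec:method} is missing the square on $(\mu_{\phi,j}-\mu_{\mathrm{AE},j})$, and the derivation in \Cref{kl_derive} does include it.
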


\begin{proof}
Writing the log-density ratio,
\begin{equation}
\log\frac{q(z)}{p(z)}=\log\frac{\sigma_2}{\sigma_1}-\frac{(z-\mu_1)^2}{2\sigma_1^2}+\frac{(z-\mu_2)^2}{2\sigma_2^2}.
\end{equation}
Taking the expectation under $q$ and using $\E_q[(z-\mu_1)^2]=\sigma_1^2$ and
$\E_q[(z-\mu_2)^2]=\sigma_1^2+(\mu_1-\mu_2)^2$,
\begin{align}
\KL(q\|p)=\E_q\!\left[\log\frac{q}{p}\right]
&=\log\frac{\sigma_2}{\sigma_1}-\frac{\sigma_1^2}{2\sigma_1^2}+\frac{\sigma_1^2+(\mu_1-\mu_2)^2}{2\sigma_2^2}\nonumber\\
&=\log\frac{\sigma_2}{\sigma_1}+\frac{\sigma_1^2+(\mu_1-\mu_2)^2}{2\sigma_2^2}-\frac12,
\end{align}
which is~\Cref{eq:gauss-app}. Setting $(\mu_1,\sigma_1)=(\mu^q_k,\sigma^q_k)$ and
$(\mu_2,\sigma_2)=(\mu^p_k,\sigma^p_k)$ gives~\Cref{eq:gausskl}.
\end{proof}

\section{Transport Reparameterization and the Routed Marginal}
\label{app:transport}

\begin{lemma}[Matched transport is reparameterization]
\label{lem:transport}
Fix coordinate $k$. If $\epsilon\sim\Ncal(\mu^p_k,\sigma^{p2}_k)$ and
$z_k=T_k(\epsilon)=\mu^q_k+\frac{\sigma^q_k}{\sigma^p_k}(\epsilon-\mu^p_k)$, then
$z_k\sim\Ncal(\mu^q_k,\sigma^{q2}_k)$.
\end{lemma}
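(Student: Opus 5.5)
The plan is to show that the transport map $T_k$ of \Cref{eq:transport} is an affine map. Then the claim is just the closure of the Gaussian family under affine transformations. First I standardize the source noise. Since $\sigma^p_k>0$, set $\eta=(\epsilon-\mu^p_k)/\sigma^p_k$. Then $\eta\sim\Ncal(0,1)$, and $z_k=T_k(\epsilon)=\mu^q_k+\sigma^q_k\,\eta$. This is exactly the standard reparameterization $z=\mu+\sigma\eta$. It also matches the matched-coupling form $z_k=\mu^q_k+\sigma^q_k\eta_k$ quoted after \Cref{eq:code}.

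To make the distributional claim rigorous rather than relying on the word ``reparameterization'', I would compute the law of $z_k$ directly. There are two equivalent routes, and I would take the characteristic-function one because it avoids Jacobians. Write $T_k(\epsilon)=a\epsilon+b$ with $a=\sigma^q_k/\sigma^p_k$ and $b=\mu^q_k-a\mu^p_k$. Then
\begin{equation}
\E\big[e^{itz_k}\big]=e^{itb}\,\E\big[e^{ita\epsilon}\big]=e^{itb}\exp\!\Big(ita\mu^p_k-\tfrac12 a^2\sigma^{p2}_k t^2\Big).
\end{equation}
Substituting $a\mu^p_k+b=\mu^q_k$ and $a^2\sigma^{p2}_k=\sigma^{q2}_k$ gives $\exp(it\mu^q_k-\tfrac12\sigma^{q2}_kt^2)$. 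This is the characteristic function of $\Ncal(\mu^q_k,\sigma^{q2}_k)$, and uniqueness of characteristic functions finishes the argument.

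As a cross-check, I could instead use the change-of-variables formula. Since $a>0$, the map is strictly increasing with inverse $\epsilon=\mu^p_k+(z-\mu^q_k)/a$. The density of $z_k$ is then $p_\epsilon(\epsilon(z))/a$, and the exponent simplifies to $-(z-\mu^q_k)^2/(2\sigma^{q2}_k)$ with normalizing constant $1/(\sqrt{2\pi}\sigma^q_k)$.

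There is no real obstacle. The only point needing care is the nondegeneracy assumption $\sigma^p_k>0$ and $\sigma^q_k>0$. The first holds because the prior standard deviations come from an EM fit, which one assumes has positive variances. The second holds because $\sigma^q_k=\exp(\tfrac12\log\sigma^{q2}_k)$. Together these make $a$ well defined and positive.

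Finally, I would remark that $z_k$ is a differentiable function of $(\mu^q_k,\sigma^q_k)$ for fixed $\epsilon$. This is what justifies the earlier claim that the transport preserves the reparameterization gradient.
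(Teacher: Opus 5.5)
Your proof is correct and takes essentially the same route as the paper: $T_k$ is affine, so $z_k$ is Gaussian with mean $\mu^q_k$ and variance $(\sigma^q_k/\sigma^p_k)^2\sigma^{p2}_k=\sigma^{q2}_k$. The only difference is that the paper cites closure of Gaussians under affine maps and computes the two moments, whereas your characteristic-function (or change-of-variables) computation proves that closure explicitly and gets both parameters in one step.
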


\begin{proof}
$z_k$ is an affine function of the Gaussian $\epsilon$, hence Gaussian. Its mean and variance are
\begin{align}
\E[z_k]&=\mu^q_k+\frac{\sigma^q_k}{\sigma^p_k}\big(\E[\epsilon]-\mu^p_k\big)=\mu^q_k,\\
\mathrm{Var}[z_k]&=\Big(\frac{\sigma^q_k}{\sigma^p_k}\Big)^2\mathrm{Var}[\epsilon]
=\Big(\frac{\sigma^q_k}{\sigma^p_k}\Big)^2\sigma^{p2}_k=\sigma^{q2}_k.
\end{align}
Thus $z_k\sim\Ncal(\mu^q_k,\sigma^{q2}_k)$. In particular, when the routing in~\Cref{eq:route} selects
the home component ($c_k=k$, so $M^p_{k,k}=\mu^p_k$ and $S^p_{k,k}=\sigma^p_k$), the categorical leaves
the marginal of $z_k$ unchanged.
\end{proof}

\begin{proposition}[Routed marginal]
\label{prop:routed}
Under the routed coupling, $c_k\sim\Cat(\pi^p)$ and
$\epsilon_k\mid c_k\sim\Ncal(M^p_{c_k,k},S^{p2}_{c_k,k})$, the coordinate
$z_k=\mu^q_k+\frac{\sigma^q_k}{\sigma^p_k}(\epsilon_k-\mu^p_k)$ has the conditional law
\begin{equation}
z_k\mid (c_k=c)\;\sim\;\Ncal\!\Big(\,\underbrace{\mu^q_k+\tfrac{\sigma^q_k}{\sigma^p_k}\big(M^p_{c,k}-\mu^p_k\big)}_{m_c},\;
\underbrace{\big(\tfrac{\sigma^q_k}{\sigma^p_k}S^p_{c,k}\big)^2}_{s_c^2}\Big),
\end{equation}
and the marginal law is the $K$-component Gaussian mixture
$\;z_k\sim\sum_{c=1}^{K}\pi^p_c\,\Ncal(m_c,s_c^2)$.
\end{proposition}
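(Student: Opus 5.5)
The plan is to condition on the routed component and reduce to an affine-Gaussian computation, in the same way as \Cref{lem:transport}. Then we recover the marginal by the law of total probability over $c_k$. The encoder outputs $\mu^q_k,\sigma^q_k$ and the frozen prior parameters $\mu^p_k,\sigma^p_k,M^p,S^p$ are treated as fixed constants (we condition on $x$). We assume $\sigma^p_k>0$, so that the map $T_k$ in \Cref{eq:transport} is well defined and affine with slope $a_k:=\sigma^q_k/\sigma^p_k>0$.

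\textbf{Conditional law.} Fix $c\in\{1,\dots,K\}$. Given $c_k=c$, we have $\epsilon_k\sim\Ncal(M^p_{c,k},S^{p2}_{c,k})$, and $z_k=T_k(\epsilon_k)=a_k\epsilon_k+(\mu^q_k-a_k\mu^p_k)$ is an affine image of a Gaussian. It is therefore Gaussian; one can check this directly, or via the characteristic function $\E[e^{itz_k}\mid c_k=c]=e^{it(\mu^q_k-a_k\mu^p_k)}\,\E[e^{ita_k\epsilon_k}\mid c_k=c]$. Linearity of expectation gives the mean
\[
\E[z_k\mid c_k=c]=\mu^q_k+a_k\big(M^p_{c,k}-\mu^p_k\big)=m_c,
\]
and the scaling rule for variance gives
\[
\mathrm{Var}[z_k\mid c_k=c]=a_k^2S^{p2}_{c,k}=s_c^2.
\]
This is exactly the computation of \Cref{lem:transport}, with $(\mu^p_k,\sigma^p_k)$ in the noise law replaced by $(M^p_{c,k},S^p_{c,k})$. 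As a consistency check, the case $c=k$ gives $m_k=\mu^q_k$ and $s_k=\sigma^q_k$, which recovers the lemma.

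\textbf{Marginal law.} Since $c_k\sim\Cat(\pi^p)$ is independent of the encoder outputs, for any Borel set $A$ the law of total probability gives
\[
\Pr(z_k\in A)=\sum_{c=1}^{K}\pi^p_c\Pr(z_k\in A\mid c_k=c)=\sum_{c=1}^{K}\pi^p_c\int_A\Ncal(z;m_c,s_c^2)\,dz.
\]
Hence the density of $z_k$ is $\sum_c\pi^p_c\Ncal(z;m_c,s_c^2)$, which is the stated $K$-component mixture.

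\textbf{Main obstacle.} The only delicate point is degeneracy. If some $S^p_{c,k}=0$, or if $\sigma^q_k=0$, then the component is a point mass rather than a Gaussian density. In that case we read $\Ncal(m_c,0)$ as $\delta_{m_c}$ and state the mixture as a mixture of measures; the argument through characteristic functions goes through unchanged. Everything else is routine bookkeeping. The one thing to be careful about is the indexing: the component index $c$ selects the row of $M^p,S^p$, while $k$ fixes the coordinate.
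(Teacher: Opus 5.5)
Your proposal is correct and follows the paper's proof essentially step for step: condition on $c_k=c$, note that the affine image $T_k(\epsilon_k)$ is Gaussian with mean $m_c$ and variance $s_c^2$ exactly as in \Cref{lem:transport}, then marginalize over $c_k\sim\Cat(\pi^p)$ to obtain the $K$-component mixture. Your extra remarks on $\sigma^p_k>0$ and the degenerate point-mass case are harmless additions the paper leaves implicit. Your $c=k$ consistency check, like the paper's, also silently uses the identification $M^p_{k,k}=\mu^p_k$, $S^p_{k,k}=\sigma^p_k$.
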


\begin{proof}
Conditioning on $c_k=c$, $\epsilon_k$ is Gaussian and $z_k$ is the affine map $T_k$ of it; as in
\Cref{lem:transport}, $\E[z_k\mid c]=\mu^q_k+\frac{\sigma^q_k}{\sigma^p_k}(M^p_{c,k}-\mu^p_k)=m_c$
and $\mathrm{Var}[z_k\mid c]=(\frac{\sigma^q_k}{\sigma^p_k})^2 S^{p2}_{c,k}=s_c^2$. Marginalizing over
$c_k\sim\Cat(\pi^p)$ gives $p(z_k)=\sum_{c}\pi^p_c\,\Ncal(z_k;m_c,s_c^2)$. For $c=k$ we recover
$m_k=\mu^q_k$, $s_k=\sigma^q_k$ (\Cref{lem:transport}); for $c\neq k$ the component is shifted by
$\frac{\sigma^q_k}{\sigma^p_k}(M^p_{c,k}-\mu^p_k)$ and rescaled by $S^p_{c,k}/\sigma^p_k$, which is how
the routing categorical injects the prior mixture's cross-component geometry into the code.
\end{proof}

\section{Evidence Lower Bound}
\label{app:elbo}

For completeness we recall the ELBO with the fixed mixture prior. For any approximate posterior
$q_\phi(z|x)$,
\begin{align}
\log p_\theta(x)
&=\log\int p_\theta(x|z)\,p(z)\,dz
=\log\,\E_{q_\phi(z|x)}\!\left[\frac{p_\theta(x|z)\,p(z)}{q_\phi(z|x)}\right]\nonumber\\
&\ge\E_{q_\phi(z|x)}\!\left[\log\frac{p_\theta(x|z)\,p(z)}{q_\phi(z|x)}\right]
=\E_{q_\phi(z|x)}[\log p_\theta(x|z)]-\KL\big(q_\phi(z|x)\,\|\,p(z)\big)=\mathcal{L}(x),
\end{align}
where the inequality is Jensen's. Substituting the mixture KL bound of \Cref{lem:mixkl} for the
KL term gives a (looser) tractable lower bound on $\log p_\theta(x)$ that is maximized during training;
the reparameterized transport~\Cref{eq:transport} provides low-variance gradients of the first term
with respect to $\phi$.

\section{Detailed KL Divergence Derivation}
\label{kl_derive}
Consider two $d$-dimensional diagonal Gaussian distributions:

\begin{align}
q(z) &= \mathcal{N}(\mathbf{z} \mid \bm{\mu}_q, \mathrm{diag}(\bm{\sigma}_q^2)), \\
p(z) &= \mathcal{N}(\mathbf{z} \mid \bm{\mu}_{p}, \mathrm{diag}(\bm{\sigma}_{p}^2)).
\end{align}

where $p(z)$ is determined by autoencoder. The Kullback–Leibler divergence is defined as:

\begin{equation}
D_{\mathrm{KL}}(q \,\Vert\, p) = \int q(z) \log \frac{q(z)}{p(z)} \, dz.
\end{equation}


The probability density function of a $d$-dimensional diagonal Gaussian is:

\begin{equation}
\mathcal{N}(\mathbf{z} \mid \bm{\mu}, \mathrm{diag}(\bm{\sigma}^2)) = 
\frac{1}{(2 \pi)^{d/2} \prod_{j=1}^d \sigma_j} 
\exp\Bigg[-\frac{1}{2} \sum_{j=1}^d \frac{(z_j - \mu_j)^2}{\sigma_j^2} \Bigg].
\end{equation}

Hence:

\begin{align}
q(z) &= \frac{1}{(2 \pi)^{d/2} \prod_{j=1}^d \sigma_{q,j}} 
\exp\Bigg[-\frac{1}{2} \sum_{j=1}^d \frac{(z_j - \mu_{q,j})^2}{\sigma_{q,j}^2} \Bigg], \\
p(z) &= \frac{1}{(2 \pi)^{d/2} \prod_{j=1}^d \sigma_{p,j}} 
\exp\Bigg[-\frac{1}{2} \sum_{j=1}^d \frac{(z_j - \mu_{p,j})^2}{\sigma_{p,j}^2} \Bigg].
\end{align}


\begin{align}
\log \frac{q(z)}{p(z)} &= \log q(z) - \log p(z) \\
&= \Big[-\frac{d}{2}\log(2\pi) - \sum_{j=1}^d \log \sigma_{q,j} - \frac{1}{2} \sum_{j=1}^d \frac{(z_j - \mu_{q,j})^2}{\sigma_{q,j}^2} \Big] \\
& \quad - \Big[-\frac{d}{2}\log(2\pi) - \sum_{j=1}^d \log \sigma_{p,j} - \frac{1}{2} \sum_{j=1}^d \frac{(z_j - \mu_{p,j})^2}{\sigma_{p,j}^2} \Big] \\
&= \sum_{j=1}^d \Big[ \log \frac{\sigma_{p,j}}{\sigma_{q,j}} + \frac{1}{2} \Big(\frac{(z_j - \mu_{p,j})^2}{\sigma_{p,j}^2} - \frac{(z_j - \mu_{q,j})^2}{\sigma_{q,j}^2}\Big) \Big].
\end{align}


\begin{equation}
D_{\mathrm{KL}}(q \,\Vert\, p) = \mathbb{E}_{z \sim q} \Big[ \sum_{j=1}^d \log \frac{\sigma_{p,j}}{\sigma_{q,j}} + \frac{1}{2} \Big(\frac{(z_j - \mu_{p,j})^2}{\sigma_{p,j}^2} - \frac{(z_j - \mu_{q,j})^2}{\sigma_{q,j}^2} \Big) \Big].
\end{equation}

The first term is constant with respect to $z$, and for the second term, note that:

\begin{equation}
\mathbb{E}_{z \sim q}[(z_j - \mu_{q,j})^2] = \sigma_{q,j}^2,
\end{equation}

\begin{equation}
\mathbb{E}_{z \sim q}[(z_j - \mu_{p,j})^2] = \sigma_{q,j}^2 + (\mu_{q,j} - \mu_{p,j})^2.
\end{equation}


\begin{align}
D_{\mathrm{KL}}(q \,\Vert\, p) &= \sum_{j=1}^d \Big[ \log \frac{\sigma_{p,j}}{\sigma_{q,j}} + \frac{1}{2} \Big( \frac{\sigma_{q,j}^2 + (\mu_{q,j} - \mu_{p,j})^2}{\sigma_{p,j}^2} - \frac{\sigma_{q,j}^2}{\sigma_{q,j}^2} \Big) \Big] \\
&= \sum_{j=1}^d \Big[ \log \frac{\sigma_{p,j}}{\sigma_{q,j}} + \frac{\sigma_{q,j}^2 + (\mu_{q,j} - \mu_{p,j})^2}{2 \sigma_{p,j}^2} - \frac{1}{2} \Big].
\end{align}


while

\begin{equation}
\log \frac{\sigma_{p,j}}{\sigma_{q,j}} = \frac{1}{2} \log \frac{\sigma_{p,j}^2}{\sigma_{q,j}^2},
\end{equation}

giving the final closed form of KL Divergence:

\begin{equation}
D_{\mathrm{KL}}(q \,\Vert\, p) = \frac{1}{2} \sum_{j=1}^d \left[ \log \frac{\sigma_{p,j}^2}{\sigma_{q,j}^2} + \frac{\sigma_{q,j}^2 + (\mu_{q,j} - \mu_{p,j})^2}{\sigma_{p,j}^2} - 1 \right]
\end{equation}

\section{Controlled Latent Sampling: Mathematical Justification}

In X-VAE, the latent prior is derived from a pretrained autoencoder (AE):

\begin{equation}
z \sim p_{\text{AE}}(z) = \mathcal{N}(\mu_{\text{AE}}, \mathrm{diag}(\sigma_{\text{AE}}^2)).
\end{equation}


A standard generative sampling without scaling uses:

\begin{equation}
z = \mu_{\text{AE}} + \sigma_{\text{AE}} \odot \epsilon, \quad \epsilon \sim \mathcal{N}(0, I),
\end{equation}

so that $\mathbb{E}[z] = \mu_{\text{AE}}$ and $\mathrm{Var}[z] = \sigma_{\text{AE}}^2$.


We introduce a scale factor $s > 0$ for controlled exploration:

\begin{equation}
z_s = \mu_{\text{AE}} + \alpha \cdot \sigma_{\text{AE}} \odot \epsilon, \quad \epsilon \sim \mathcal{N}(0, I).
\end{equation}

Then, by standard properties of Gaussian random variables:

\begin{align}
\mathbb{E}[z_s] &= \mu_{\text{AE}}, \\
\mathrm{Var}[z_s] &= \mathrm{Var}[\alpha \cdot \sigma_{\text{AE}} \odot \epsilon] 
= \alpha^2 \, \sigma_{\text{AE}}^2.
\end{align}


Let $q_\phi(z|x)$ be the approximate posterior learned by the X-VAE. 
During generation, we desire $z$ to cover regions of the latent space that correspond 
to meaningful outputs. Scaling the variance modifies the prior's spread:

\begin{itemize}
    \item If $\alpha > 1$, the latent space is explored more broadly, potentially increasing sample diversity.  
    \item If $\alpha < 1$, the latent space is concentrated near the mean, reducing variance in generated samples.  
\end{itemize}

Mathematically, $z_s \sim \mathcal{N}(\mu_{\text{AE}}, \alpha^2 \, \mathrm{diag}(\sigma_{\text{AE}}^2))$, 
so the generation distribution remains Gaussian but with **controllable variance**.


The Fréchet Inception Distance (FID) between generated samples and data is sensitive to the 
mean and covariance of the latent codes. Denote:

\begin{align}
\mu_g &= \mathbb{E}[z_s] = \mu_{\text{AE}}, \\
\Sigma_g &= \mathrm{Var}[z_s] = \alpha^2 \, \mathrm{diag}(\sigma_{\text{AE}}^2), \\
\Sigma_d &= \mathrm{Var}[z_{\text{data}}].
\end{align}

FID is roughly minimized when the generated latent mean and covariance match the true latent distribution. 
Introducing $\alpha \neq 1$ adjusts $\Sigma_g$ toward $\Sigma_d$, potentially reducing FID, 
as observed empirically.


Scaling the latent standard deviation:

\begin{equation}
z_s = \mu_{\text{AE}} + \alpha \cdot \sigma_{\text{AE}} \odot \epsilon
\end{equation}

\begin{itemize}
    \item Preserves the Gaussian nature of the AE prior  
    \item Allows controlled variance adjustment  
    \item Matches latent covariance to data distribution more closely  
    \item Improves sample quality and FID in practice
\end{itemize}

This provides a **mathematical justification** for controlled latent sampling in X-VAE.

\section{Theoretical insight: Improved manifold alignment}

Let \( M_x \subset \mathbb{R}^d \) denote the data manifold and \( M_y \subset \mathbb{R}^k \) the latent manifold. In a VAE with an isotropic prior, \( M_y \) is regularized to align with a spherical Gaussian. However, in X-VAE, since \( P_{\text{X-VAE}}(z) \) inherits curvature from the AE's latent statistics, we effectively re-shape the latent regularizer:

\[
\text{Cov}(P_{\text{X-VAE}}(z)) = f_{\Sigma}(\Sigma_{\text{AE}})
\]

where the latent space is no longer isotropic but aligned with the empirical curvature of the data.

Formally, the pushforward of the AE encoder \( f_{\psi} \) defines a map:

\[
z = f_{\psi}(x): x \in M_x
\]

If \( f_{\psi} \) is locally diffeomorphic (smooth and invertible in a neighborhood), then \( M_z = f_{\psi}(M_x) \) is a differentiable submanifold. By setting the VAE prior to match this manifold's empirical distribution \( P_{\text{AE}}(z) \), we minimize the mismatch between the learned posterior \( q_{\phi}(z | x) \) and the true latent manifold.

\clearpage
\section{Implementation Details}
\label{imp_detail}



\begin{table}[!htb]
\centering
\caption{Implementation Details for MNIST and CelebA}
\label{tab:implement_details}
\small 
\begin{tabular}{@{} l c|c @{}}
\toprule
\textbf{Parameter} & \textbf{MNIST} & \textbf{CelebA} \\ 
\midrule
Image size & $1 \times 28 \times 28$ & $3 \times 64 \times 64$ \\
Encoder channels (CNN~\cite{lecun1989}) & 1--32--64--128 & 3--32--64--128--256 \\
Latent dimension ($K$) & 64 & 256 \\
VampPrior pseudo-inputs & 200 & 32 \\
Batch size & 128 & 128 \\
Epochs (AE / VAE) & 10 / 15 & 10 / 15 \\
\hline
Training loss & \multicolumn{2}{c}{MSE (Sum Reduction)} \\
Optimizer & \multicolumn{2}{c}{Adam~\cite{kingma2017adammethodstochasticoptimization}} \\
Learning rate & \multicolumn{2}{c}{$1 \times 10^{-3}$} \\
Random State & \multicolumn{2}{c}{$42$}\\
\bottomrule
\end{tabular}
\end{table}

\begin{table}[!htb]
\centering
\caption{FID Evaluation Protocol}
\begin{tabular}{ll}
\hline
\textbf{Component} & \textbf{Setting} \\
    \hline
    Real samples & 1,000 images \\
    Fake samples & 2,000 images \\
    Latent sampling & $\mathcal{N}(\mu_{AE}, (\alpha \sigma_{AE})^2)$ \\
    Batch size (generation) & 128 \\
    Image format & PNG \\
    Real image normalization & $[0,1]$ \\
    Fake image normalization & Decoder output \\
    FID implementation & PyTorch-FID \\
    Feature extractor & Inception-V3 \\
    FID batch size & 50  \\
    \hline
\end{tabular}
\label{tab:fid_protocol}
\end{table}

\begin{table}[!htb] 
\centering 
\caption{Alienware Aurora R15 Hardware Specifications} 
\label{tab:alienware_r15_specs} 
\begin{tabular}{c | c} 
\toprule 
\textbf{Component} & \textbf{Specification} \\ 
\midrule 
\textbf{Processor (CPU)} & 13th Gen Intel Core i9-13900F \\ 
\textbf{CPU Cores} & 24-Core, 68MB Cache \\ 
\textbf{Memory (RAM)} & 32GB (2 x 16GB) DDR5 @ 4800MHz \\ 
\textbf{Graphics (GPU)} & NVIDIA GeForce RTX 4090\\ 
\textbf{GPU Memory (RAM)} & 24GB GDDR6X \\ 
\textbf{Storage (Typical)} & 1TB NVMe M.2 PCIe SSD\\ 
\textbf{Motherboard} & Proprietary Intel Z790 Chipset \\ 
\textbf{Operating System} & Ubuntu 24.04 LTS (x86\_64) \\ 
\textbf{Torch Version} & 2.9.0 \\ 
\textbf{CUDA Version} & 12.6 \\ 
\bottomrule 
\end{tabular} 
\end{table}

\section{Full Per-Model Results}
\label{app:fullresults}

\Cref{tab:quality-full}, \Cref{tab:loss-full}, and\Cref{tab:loss-clusters-full} give the complete per-model results summarized in
\Cref{sec:experiments}: all baselines and every configuration of our method (the single-Gaussian
X-VAE, the interleaved routed-transport model, its four $K{+}1$ routing ablations, and the
transport-split sweep). Best per column in bold; the clustered data ($K{=}3$) is summarized by the
training objective only, since Inception-based FID/IS do not apply to its two-dimensional samples.

\begin{table}[!htb]
\centering\small
\setlength{\tabcolsep}{4.5pt}
\caption{Full results: reconstruction error (MSE), FID, and Inception Score (IS) on MNIST and CelebA.
Lower is better for MSE and FID, higher for IS; best per column in bold.}
\label{tab:quality-full}
\begin{tabular}{@{}l ccc ccc@{}}
\toprule
 & \multicolumn{3}{c}{MNIST ($K{=}64$)} & \multicolumn{3}{c}{CelebA ($K{=}256$)} \\
\cmidrule(lr){2-4}\cmidrule(lr){5-7}
Model & Rec$\downarrow$ & FID$\downarrow$ & IS$\uparrow$ & Rec$\downarrow$ & FID$\downarrow$ & IS$\uparrow$ \\
\midrule
Standard VAE        & 11.24 & 49.67 & $2.382{\pm}.068$ & 87.42 & 86.34 & $1.816{\pm}.052$ \\
VampPrior VAE       & 12.12 & 47.76 & $2.460{\pm}.093$ & 90.27 & 82.80 & $\mathbf{1.858}{\pm}.047$ \\
Hierarchical VAE    & 11.89 & 46.64 & $2.411{\pm}.065$ & 104.60 & 87.09 & $1.806{\pm}.073$ \\
Standard MoG VAE    & 12.88 & 58.27 & $\mathbf{2.631}{\pm}.101$ & 92.63 & 87.49 & $1.795{\pm}.034$ \\
GM-VAE              & $\mathbf{4.98}$ & 79.08 & $2.062{\pm}.063$ & $\mathbf{74.99}$ & 89.14 & $1.781{\pm}.037$ \\
\midrule
X-VAE single-Gaussian            & 12.18 & 50.17 & $2.423{\pm}.089$ & 88.86 & 84.45 & $1.807{\pm}.053$ \\
Ours, interleaved                & 11.85 & 47.04 & $2.435{\pm}.119$ & 91.64 & 85.46 & $1.823{\pm}.066$ \\
\quad $K{-}1+1$ ($\pi^p,\Ncal^p$)  & 12.30 & 48.42 & $2.391{\pm}.113$ & 88.09 & 84.79 & $1.774{\pm}.063$ \\
\quad $K{-}1+1$ ($\pi^q,\Ncal^p$)  & 11.89 & 47.41 & $2.402{\pm}.071$ & 87.99 & $\mathbf{80.40}$ & $1.772{\pm}.060$ \\
\quad $K{-}1+1$ ($\pi^p,\Ncal^q$)  & 12.12 & 48.13 & $2.418{\pm}.092$ & 92.79 & 86.55 & $1.818{\pm}.072$ \\
\quad $K{-}1+1$ ($\pi^q,\Ncal^q$)  & 11.60 & 47.21 & $2.411{\pm}.079$ & 88.40 & 83.11 & $1.797{\pm}.032$ \\
\quad transport $K$ (full)       & 12.26 & 49.04 & $2.360{\pm}.083$ & 87.54 & 85.23 & $1.746{\pm}.029$ \\
\quad transport $\tfrac{2}{3}K$  & 11.65 & 47.54 & $2.382{\pm}.106$ & 88.24 & 81.90 & $1.807{\pm}.070$ \\
\quad transport $\tfrac{1}{3}K$  & 12.06 & $\mathbf{46.01}$ & $2.370{\pm}.107$ & 87.66 & 89.07 & $1.848{\pm}.066$ \\
\bottomrule
\end{tabular}
\end{table}

\begin{table}[!htb]
\centering\footnotesize
\setlength{\tabcolsep}{4pt}
\caption{Full results: final-epoch train/test objective with reconstruction and KL terms on MNIST and
CelebA. Lowest test objective per dataset in bold; magnitudes are not comparable across datasets.}
\label{tab:loss-full}
\begin{tabular}{@{}l cccc cccc@{}}
\toprule
 & \multicolumn{4}{c}{MNIST ($K{=}64$)} & \multicolumn{4}{c}{CelebA ($K{=}256$)} \\
\cmidrule(lr){2-5}\cmidrule(lr){6-9}
Model & train & test & recon & KL & train & test & recon & KL \\
\midrule
Standard VAE        & 28.96 & 28.85 & 16.67 & 12.29 & 167.34 & 165.19 & 111.75 & 55.59 \\
VampPrior VAE       & 28.59 & 28.30 & 17.51 & 11.08 & 166.09 & 166.22 & 111.51 & 54.57 \\
Hierarchical VAE    & 28.31 & $\mathbf{28.10}$ & 16.64 & 11.67 & 173.22 & 170.79 & 124.64 & 48.58 \\
Standard MoG VAE    & 29.50 & 29.24 & 17.70 & 11.80 & 167.87 & 167.96 & 112.89 & 54.98 \\
GM-VAE              & 69.37 & 69.17 & 10.03 & 59.33 & 231.67 & 232.07 & 103.47 & 128.21 \\
\midrule
X-VAE single-Gaussian            & 29.16 & 29.00 & 17.29 & 11.86 & 166.68 & 165.51 & 110.90 & 55.77 \\
Ours, interleaved                & 29.02 & 28.75 & 16.91 & 12.11 & 166.74 & 167.44 & 110.85 & 55.89 \\
\quad $K{-}1+1$ ($\pi^p,\Ncal^p$)  & 29.11 & 28.90 & 17.24 & 11.87 & 166.49 & $\mathbf{164.19}$ & 110.68 & 55.80 \\
\quad $K{-}1+1$ ($\pi^q,\Ncal^p$)  & 28.82 & 28.61 & 16.70 & 12.13 & 166.64 & 164.82 & 110.61 & 56.03 \\
\quad $K{-}1+1$ ($\pi^p,\Ncal^q$)  & 28.89 & 28.69 & 17.06 & 11.83 & 166.73 & 167.15 & 111.11 & 55.62 \\
\quad $K{-}1+1$ ($\pi^q,\Ncal^q$)  & 28.91 & 28.56 & 16.88 & 12.04 & 167.88 & 164.65 & 111.95 & 55.93 \\
\quad transport $K$ (full)       & 29.06 & 28.97 & 17.09 & 11.97 & 167.03 & 165.31 & 111.30 & 55.73 \\
\quad transport $\tfrac{2}{3}K$  & 29.08 & 28.75 & 16.98 & 12.10 & 166.59 & 165.42 & 110.58 & 56.01 \\
\quad transport $\tfrac{1}{3}K$  & 29.18   & 29.16   & 17.47   & 11.71   & 166.82 & 164.19 & 110.86 & 55.96 \\
\bottomrule
\end{tabular}
\end{table}

\begin{table}[!htb]
\centering\footnotesize
\setlength{\tabcolsep}{5pt}
\caption{Full results on the clustered data ($K{=}3$): final-epoch train/test objective with
reconstruction and KL terms. GM-VAE's objective reflects a degenerate (near-zero/negative) sampled KL.
The interleaved model and transport $\tfrac{1}{3}K$ both reduce to a single transport coordinate at
$K{=}3$ and reconstruct markedly worse than configurations with a larger transport fraction.}
\label{tab:loss-clusters-full}
\begin{tabular}{@{}l cccc@{}}
\toprule
Model & train & test & recon & KL \\
\midrule
Standard VAE        & 0.048 & 0.048 & 0.010 & 0.038 \\
VampPrior VAE       & 0.049 & 0.051 & 0.011 & 0.038 \\
Hierarchical VAE    & 0.047 & 0.047 & 0.011 & 0.036 \\
Standard MoG VAE    & 0.047 & 0.046 & 0.011 & 0.036 \\
GM-VAE              & 0.008 & 0.007 & 0.010 & $-0.002$ \\
\midrule
X-VAE single-Gaussian            & 0.049 & 0.048 & 0.011 & 0.038 \\
Ours, interleaved                & 0.115 & 0.120 & 0.086 & 0.029 \\
\quad $K{-}1+1$ ($\pi^p,\Ncal^p$)  & 0.053 & 0.053 & 0.013 & 0.041 \\
\quad $K{-}1+1$ ($\pi^q,\Ncal^p$)  & 0.053 & 0.051 & 0.011 & 0.042 \\
\quad $K{-}1+1$ ($\pi^p,\Ncal^q$)  & 0.052 & 0.051 & 0.011 & 0.041 \\
\quad $K{-}1+1$ ($\pi^q,\Ncal^q$)  & 0.056 & 0.057 & 0.015 & 0.040 \\
\quad transport $K$ (full)       & 0.053 & 0.054 & 0.012 & 0.041 \\
\quad transport $\tfrac{2}{3}K$  & 0.055 & 0.051 & 0.013 & 0.041 \\
\quad transport $\tfrac{1}{3}K$  & 0.119 & 0.121 & 0.091 & 0.028 \\
\bottomrule
\end{tabular}
\end{table}

\clearpage
\section{Reconstructions for all models}
\label{app:full_recon}
\begin{figure}[!htb]
\centering
\includegraphics[width=0.5\linewidth]{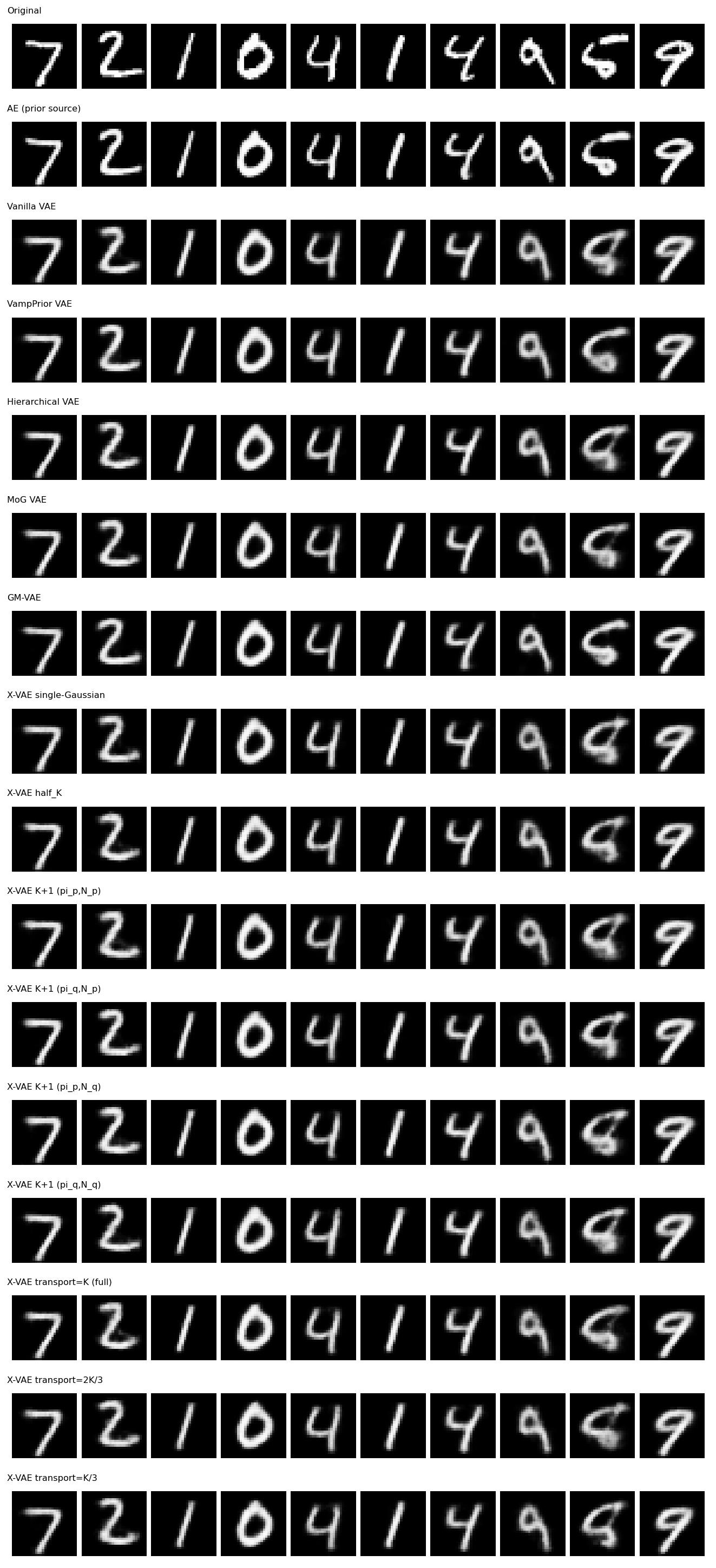}
\caption{MNIST reconstructions}
\label{fig:recon-mnist}
\end{figure}

\begin{figure}[!htb]
\centering
\includegraphics[width=0.8\linewidth]{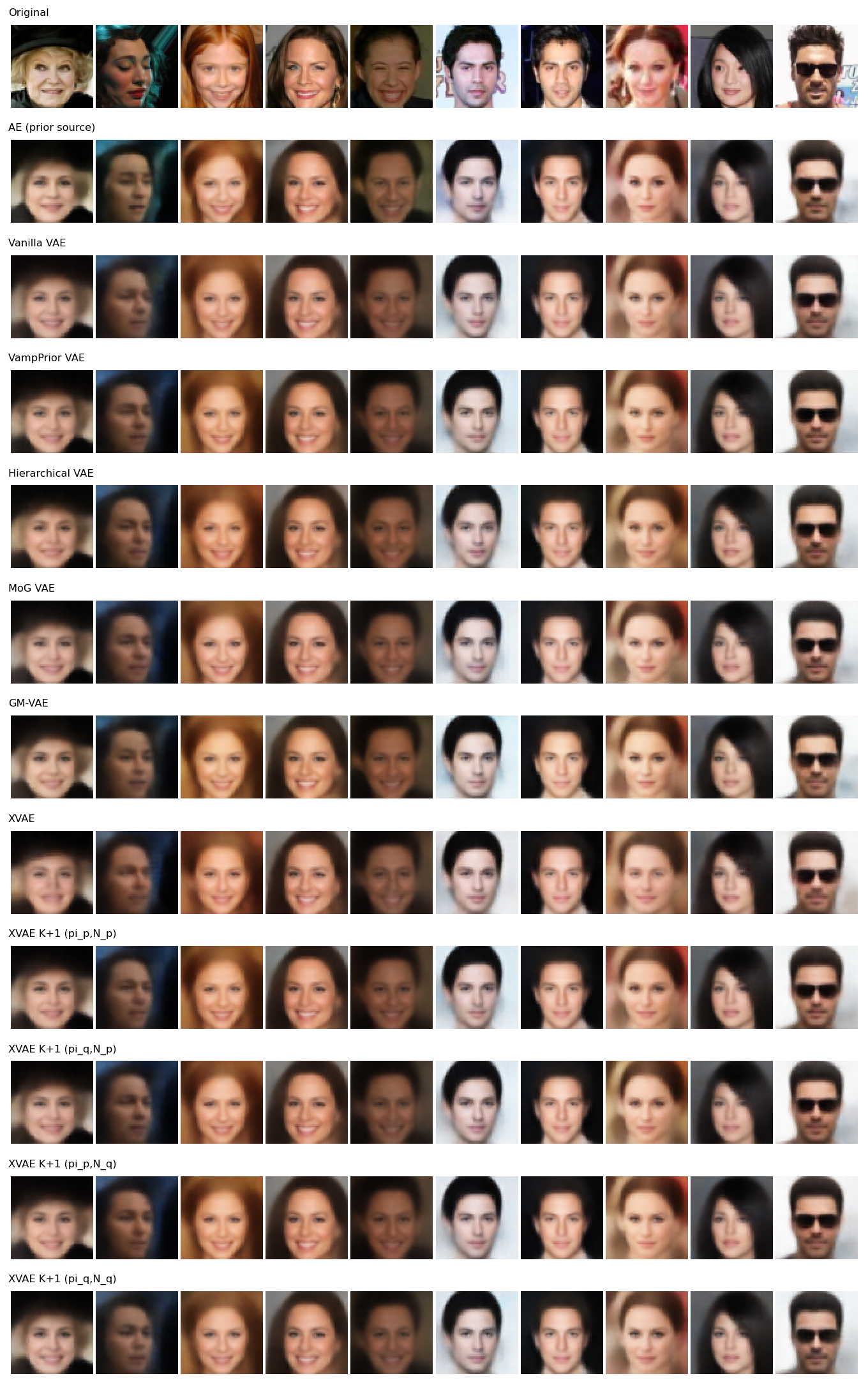}
\caption{CelebA reconstructions}
\label{fig:recon-celeba}
\end{figure}

\begin{figure}[!htb]
\centering
\includegraphics[width=0.6\linewidth]{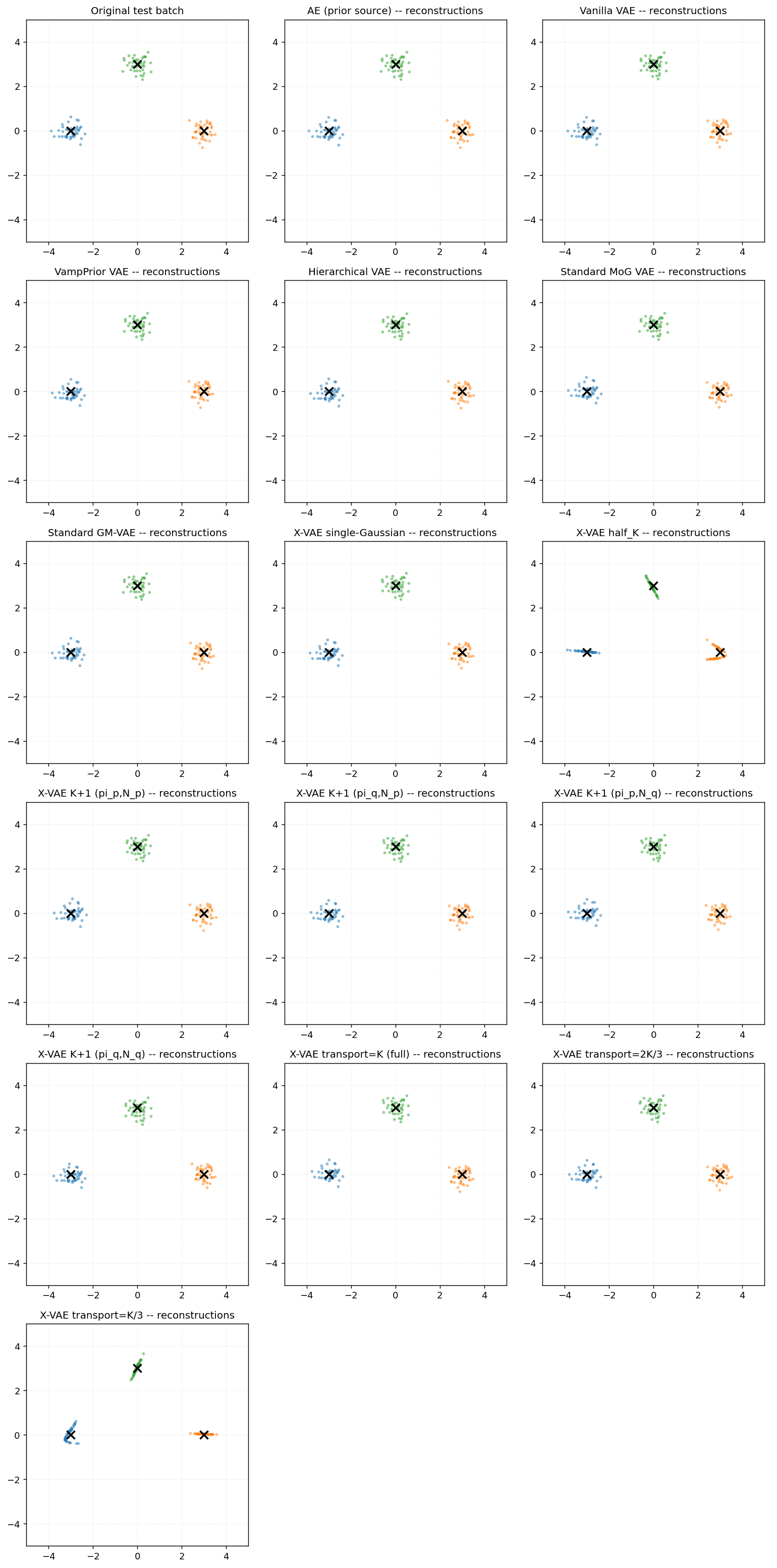}
\caption{Clustering reconstructions}
\label{fig:recon-cluster}
\end{figure}

\clearpage
\section{Generations for all models}

\begin{figure}[!htb]
\centering
\includegraphics[width=\linewidth]{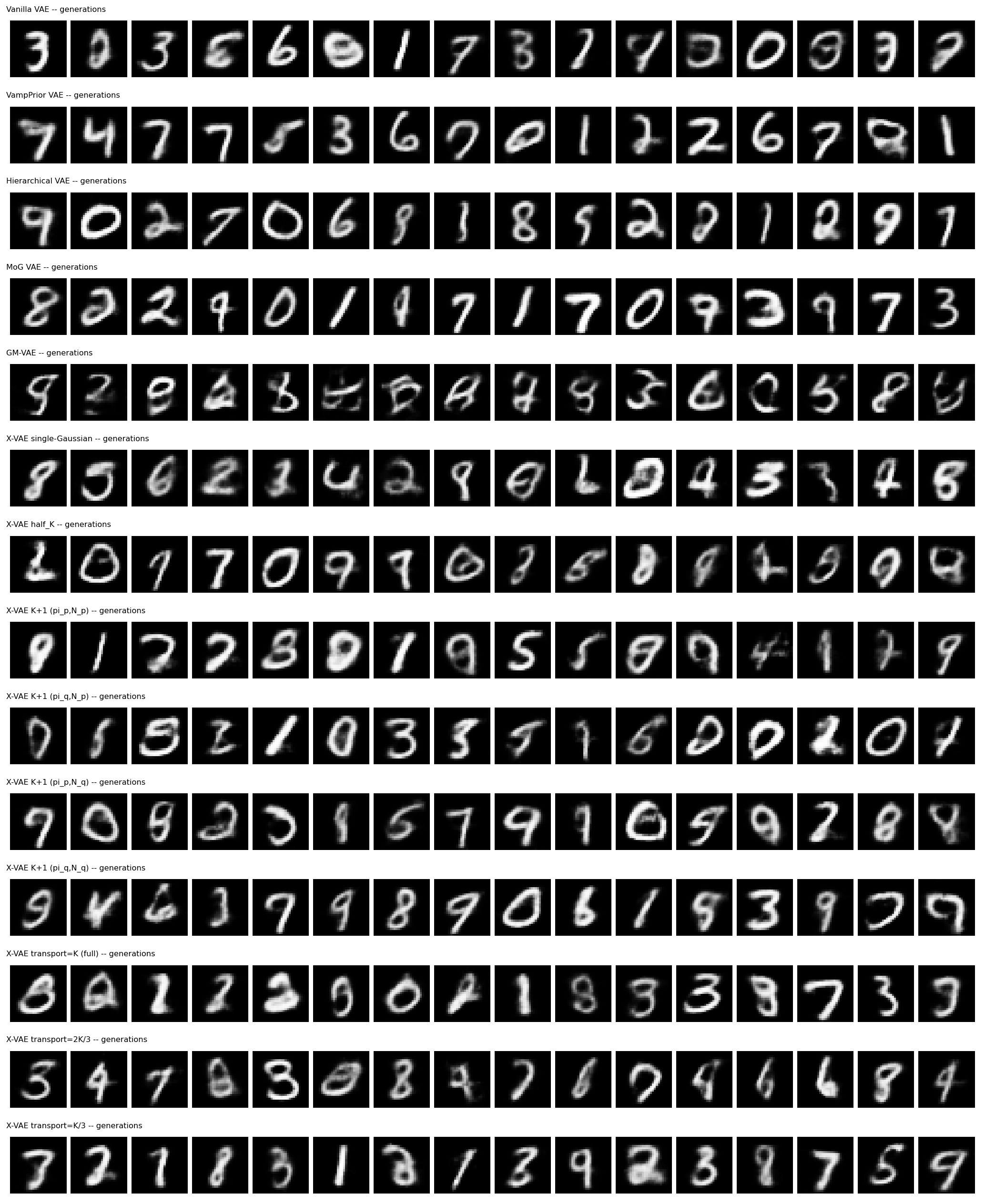}
\caption{MNIST generations}
\label{fig:gen-mnist}
\end{figure}

\begin{figure}[!htb]
\includegraphics[width=\linewidth]{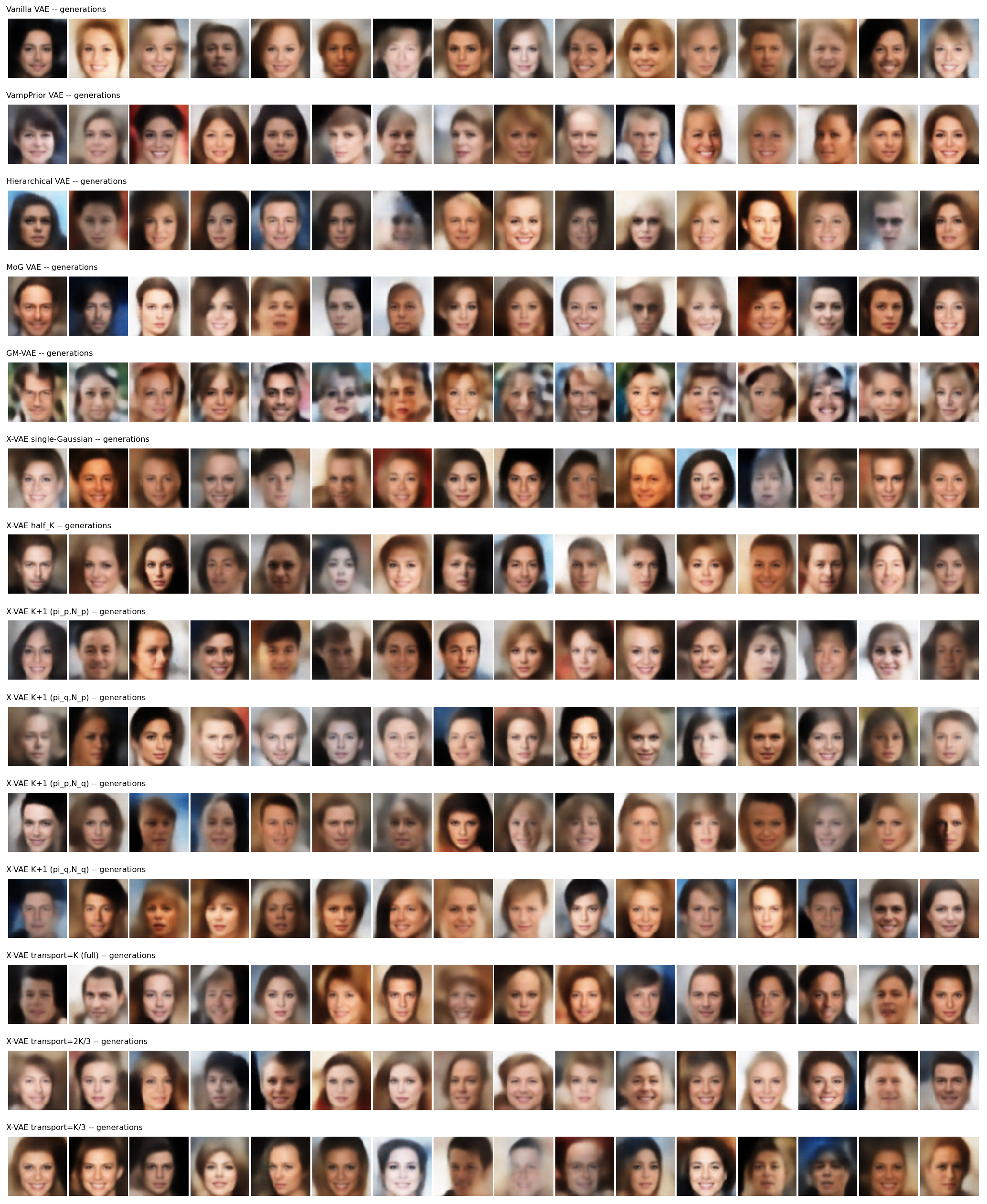}
\caption{CelebA generations}
\label{fig:gen-celeba}
\end{figure}

\begin{figure}[!htb]
\centering
\includegraphics[width=0.7\linewidth]{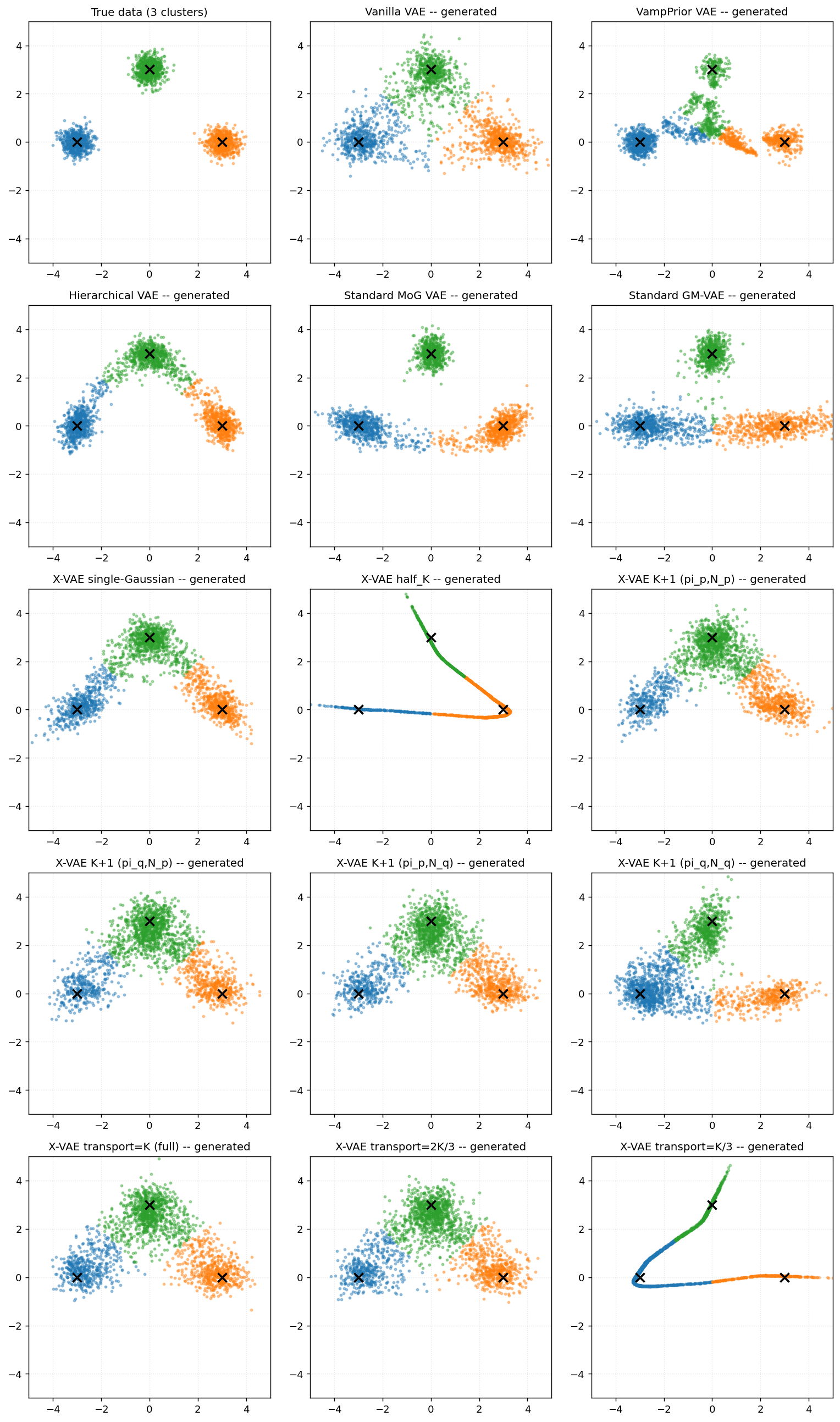}
\caption{Clustered data: samples generated by our method (preserving the three modes) versus a standard
VAE.}
\label{fig:gen-clusters}
\end{figure}
\end{document}